\journal{Applied Soft Computing}
\newtheorem{theorem}{Theorem}[section]
\newtheorem{remark}{Remark}[section]
\def\R{\mathbb{R}}
\def\confold{\mathrm{conf_{OVA}}}
\def\confour{\mathrm{conf_{\our{}}}}
\def\Pour{P_\mathrm{\our{}}}
\def\conf{\mathrm{conf}}
\def\acc{\mathrm{acc}}
\def \our{SLOVA}
\begin{document}
	
	\begin{frontmatter}
		
		\title{SLOVA: Uncertainty Estimation Using Single Label One-Vs-All Classifier}
		
		\author[jumcsaddress]{Bartosz Wójcik\corref{mycorrespondingauthor}}
		\cortext[mycorrespondingauthor]{Corresponding author}
		\ead{bartwojc@gmail.com}
		
		\author[jupaaaddress]{Jacek Grela}
		%\ead{jacekgrela@gmail.com}
		
		\author[jumcsaddress]{Marek Śmieja}
		%\ead{marek.smieja@uj.edu.pl}
		
		\author[jumcsaddress]{Krzysztof Misztal}
		%\ead{krzysztof.misztal@uj.edu.pl}
		
		\author[jumcsaddress]{Jacek Tabor}
		%\ead{jacek.tabor@uj.edu.pl}
		
		\address[jumcsaddress]{Faculty of Mathematics and Computer Science,\\Jagiellonian University\\Łojasiewicza 6, 30-348 Kraków, Poland}
		
		\address[jupaaaddress]{Faculty of Physics, Astronomy and Applied Computer Science,\\Jagiellonian University\\Łojasiewicza 11, 30-348 Kraków, Poland}
		
		\begin{abstract}
			Deep neural networks present impressive performance, yet they cannot reliably estimate their predictive confidence, limiting their applicability in high-risk domains. We show that applying a multi-label one-vs-all loss reveals classification ambiguity and reduces model overconfidence. The introduced \our{} (Single Label One-Vs-All) model redefines typical one-vs-all predictive probabilities to a single label situation, where only one class is the correct answer. The proposed classifier is confident only if a single class has a high probability and other probabilities are negligible. Unlike the typical softmax function, \our{} naturally detects out-of-distribution samples if the probabilities of all other classes are small. The model is additionally fine-tuned with exponential calibration, which allows us to precisely align the confidence score with model accuracy. We verify our approach on three tasks. First, we demonstrate that \our{} is competitive with the state-of-the-art on in-distribution calibration. Second, the performance of \our{} is robust under dataset shifts. Finally, our approach performs extremely well in the detection of out-of-distribution samples. Consequently, \our{} is a 
			tool that can be used in various applications where uncertainty modeling is required.
		\end{abstract}
		
		%\begin{keyword}
		%	uncertainty\sep one-vs-all\sep calibration\sep distribution shift\sep out-of-distribution
		%\end{keyword}
		
	\end{frontmatter}
	
	%\linenumbers
	
	%\newpage
	%\clearpage
	
	\section{Introduction}\label{sec:intro}
	
	Deep learning models frequently outperform human capabilities in typical computer vision or natural language processing tasks. Despite their impressive performance, neural networks tend to make overconfident decisions, which limits their applicability in high-risk fields such as medical diagnosis \citep{miotto2016deep}, autonomous vehicle control \citep{levinson2011towards}, or the financial and legal sector \citep{berk2017impact}. In other words, deep learning models often cannot correctly quantify predictive uncertainty.
	
	We focus on three aspects that are critical in predictive uncertainty estimation. The first one is the {\em in-distribution calibration}, which says that the probability associated with the predicted class label should reflect its confidence. Second, machine learning models should be {\em robust under dataset shifts} (small data distortion) in terms of accuracy and confidence. Finally, the models should make low confidence predictions on {\em out-of-distribution (OOD) data}. While all of these issues are closely connected, there are not many models that deal with all of them simultaneously.
	
	In this paper, we propose a simple, yet remarkably effective approach, a remedy for all of the described problems. Our idea relies on extending and calibrating the multi-label One-Vs-All (OVA) classifier \citep{chen2019multi} to the single-label case. In contrast to a typical softmax classifier designed to rank classes rather than estimate confidence, OVA models use an individual scoring function for each class. If the model assigns high scores to more than one class or returns low predictions for all classes, it informs us about its uncertainty. Although OVA has already been used for uncertainty estimation \citep{padhy2020revisiting, franchi2020one}, confidence was always defined as the maximal class probability, which results in high confidence even if two or more classes are very likely.
	
	\begin{figure}[h!]
		\centering
		\includegraphics[width=0.65\textwidth]{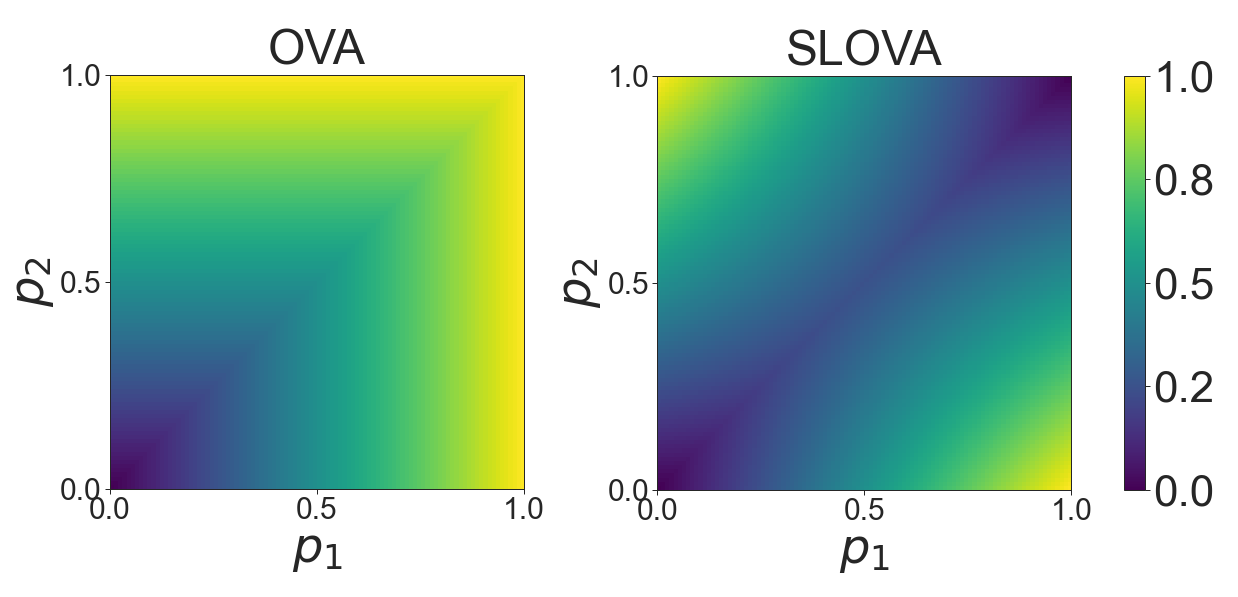} %\qquad
		\caption{Confidence score map of OVA and \our{} for a two-class classification problem. The axes $p_1,p_2$ represent the likelihood of the data-point belonging to each class while the colors encode the score. Since OVA connects confidence with the maximal class probability, it may return high confidence if two classes are equally likely. In contrast, \our{} gives high confidence if exactly one class has a high probability.}
		\label{fig:intro}
	\end{figure}
	
	Following this reasoning, we construct a post hoc model, \our{} ({\bf S}ingle-{\bf L}abel-{\bf O}ne-{\bf V}s-{\bf A}ll), which aggregates the predictive probabilities of a pretrained OVA model. It returns high confidence if and only if a single class has a high probability and other probabilities are negligible. See Figure \ref{fig:intro} for a comparison with the typical OVA confidence in the two-class classification problem. We prove that \our{} is guaranteed to return low confidence for out-of-distribution samples; see Theorem \ref{thm:main} and Figure \ref{fig:random_plane_mean}. Finally, to precisely align the confidence score with the model accuracy, we introduce exponential calibration. This transformation is capable of approximating any monotonic function \citep{kammler1979least} and, in consequence, fits perfectly for uncertainty estimation of in-distribution and shifted data.
	
	Since the performance of OVA on a single label problem is comparable to the softmax model, \our{} does not lead to a drop in accuracy. OVA is trained using standard multi-label binary cross-entropy and thus does not require any modification of the neural network architecture. The calculation of confidence scores and the calibration of \our{} is performed as a post-processing step, so the training procedure remains precisely the same; see Figure \ref{fig:intro2} for the model diagram. The implementation and usage of the proposed technique can be done with minimal effort.
	
	\begin{figure}[t]
		\centering
		\includegraphics[width=0.6\textwidth]{ 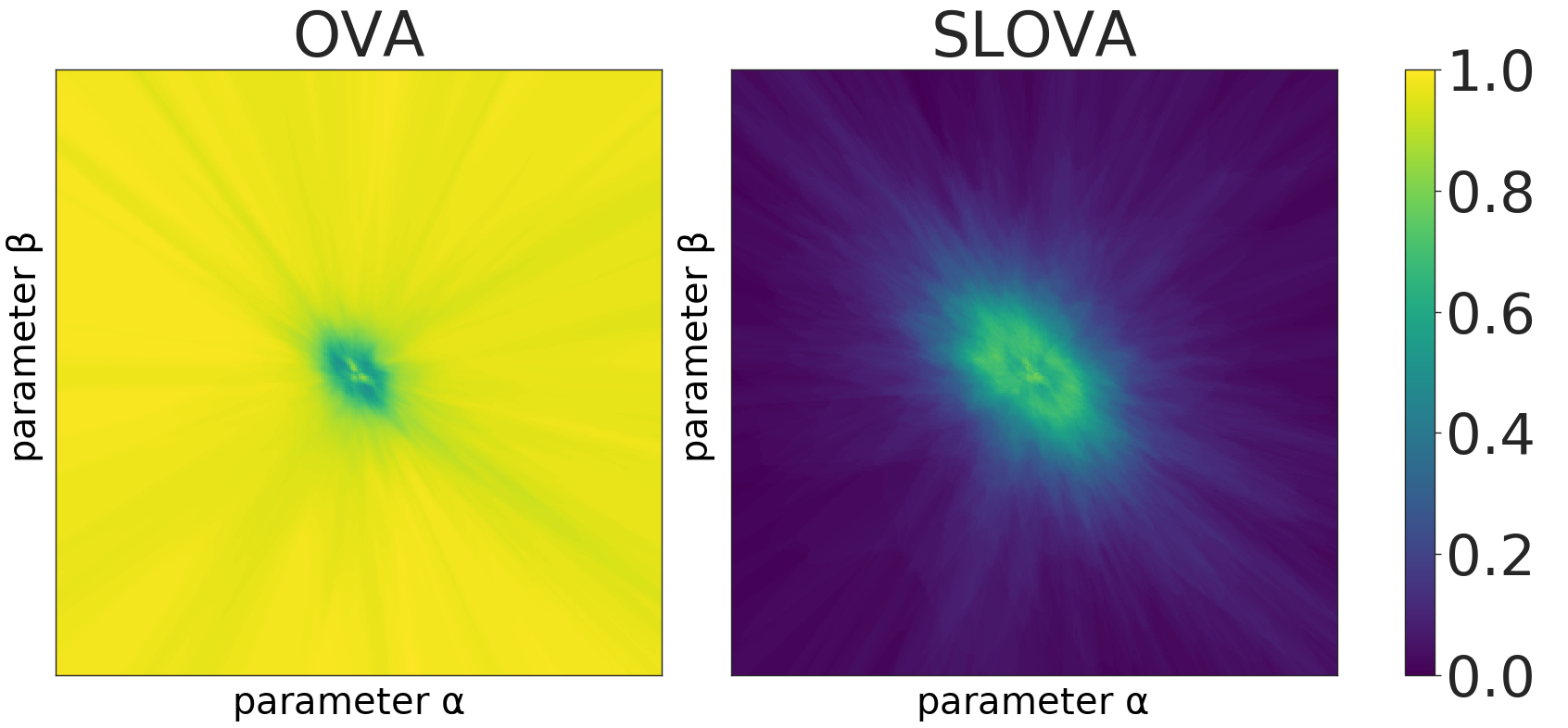}
		\caption{Confidence score across the plane spanned by 3 randomly selected test CIFAR-10 samples, parametrized by two variables $\alpha,\beta$. These parameters are picked so that the plane passes through a single sample point at the center of each plot, resulting in a high confidence score at that point. The color indicates the confidence score of OVA (left) and \our{} (right) averaged over $100$ sample triplets. We see that \our{} returns low confidence on OOD samples (far from the image center), while OVA predictions are close to 1. \label{fig:random_plane_mean}}
	\end{figure}
	
	We conduct extensive experiments on computer vision tasks following recent benchmarks \citep{ovadia2019can} \citep{hein2019relu}. We demonstrate that \our{} is competitive with the state-of-the-art on in-distribution calibration, robustness under dataset shifts, and detection of out-of-distribution data. An additional ablation study shows the impact of the aggregation scheme and model calibration on overall performance.
	
	Our contribution can be summarized as follows:
	\begin{itemize}
		\item We introduce \our{} -- a post hoc method for uncertainty modeling, which can be applied at test time to a OVA classifier. Our theoretical result shows that the confidence score of \our{} is close to 0 for samples that are sufficiently far from in-distribution data.  
		\item We construct a novel and powerful exponential calibration to align \our{} confidence with model accuracy. The introduced technique allows us to obtain more reliable confidence estimates on in-distribution data.
		\item Experimental results demonstrate that \our{} obtains performance comparable to state-of-the-art methods on three fundamental uncertainty tasks: in-distribution calibration, robustness under dataset shift, and uncertainty prediction on OOD data.
	\end{itemize}
	
	The paper is organized as follows. First, in Section \ref{sec:relwork} we describe the relevant literature. Then, in Section \ref{sec:model}, we discuss the main features of the proposed model. We introduce the \our{} confidence score, explain its application to out-of-distribution tasks, and describe the exponential calibration. In Section \ref{sec:experiments} we discuss \our{} performance in three experiments -- the in-distribution classification, robustness under dataset shift, and the identification of out-of-distribution data. The experiments contain a discussion of the results, a statistical analysis of the performance of the \our{} model, and an ablation study that focuses on identifying the most significant elements of the model. The work is concluded in Section \ref{sec:conclusion} with a discussion and future work recommendations. 
	
	\section{Related Work}
	\label{sec:relwork}
	The topic of uncertainty estimation is extensive, with a vast amount of published literature. A complete survey on uncertainty modeling in neural networks can be found in \cite{gawlikowski2021survey}. Below, we recall the papers that are the most closely related to our approach. In particular, we focus on the Bayesian approach as the most theoretically sound but not scalable, the role of calibration methods, alternative approaches to out-of-distribution detection, and the relevance of OVA models.
	
	\subsection{Bayesian methods} 
	
	One group of methods uses the Bayesian framework, which naturally expresses the uncertainty of the prediction \citep{blundell2015weight}. The main idea relies on estimating the probability distribution on the weights of the neural network, making the model robust to perturbations. Since exact Bayesian inference on the weights of a neural network is intractable, several approximations have been introduced \cite{yao2019quality}. Although theoretically attractive, these methods do not scale easily to larger models \cite{hernandez2015probabilistic}. Gal and Ghahramani \cite{gal2016dropout} propose using dropout as an approximation of Bayesian inference to obtain uncertainty estimates. In contrast to Bayesian neural networks, this approach achieves a similar effect at no additional cost. Maddox et al. \cite{maddox2019simple} improve this even further by approximating the posterior with averaged SGD (stochastic gradient descent) iterates. More precisely, by making use of the information contained in the SGD trajectory, this model approximates the posterior distribution over the weights of the neural network in the form of a Gaussian distribution. Although the above Bayesian approaches obtain good results, they require significant modifications to the training procedure and are computationally expensive compared to standard (non-Bayesian) neural networks.
	
	Lakshminarayanan et al. \cite{lakshminarayanan2016simple} proposed an alternative to Bayesian methods, which is based on creating an ensemble (committee) of neural networks. If the base models are sufficiently diverse, their aggregated predictions significantly improve the uncertainty estimates. The method can be additionally improved using adversarial training. An approximate Bayesian version of ensembling was introduced by Pearce et al. \citep{pearce2020uncertainty}. Although the ensemble methods still achieve state-of-the-art results in most benchmarks \citep{ovadia2019can} \citep{gustafsson2020evaluating}, it should be noted that training and evaluating many models is also computationally demanding.

	\subsection{Calibration methods}
	
	It is possible to use a standard model and then add a post-processing step to calibrate the outputs (which aligns predictive probabilities with model accuracy). Gao et al. \cite{guo2017calibration} show that modern neural networks are highly miscalibrated and explore multiple calibration methods. One class of methods considered there learns a logit transformation applied before they are fed to the softmax function. Temperature scaling, where a single scalar multiplicative factor is trained, stands out among multi-class calibration methods evaluated for its results, surpassing both vector scaling and matrix scaling. Kull et al. \cite{kull2019beyond} introduce a method to directly calibrate the softmax outputs instead. Zhang et al. \cite{zhang2020mix} propose to create an ensemble of calibration transformations of the same type and additionally suggest composing it together with a non-parametric calibration method. It is also possible to use a function approximated by a neural network as a parametric calibration method, provided that it preserves the internal order of the predictions \citep{rahimi2020intra}. Finally, Gaussian processes were also examined as calibration methods \citep{milios2018dirichlet} \citep{wenger2020non}.
	
	Another recent approach to uncertainty estimation relies on replacing the conventional cross-entropy function by focal loss \citep{mukhoti2020calibrating}. The idea behind focal loss is to direct the network's attention to samples for which it is currently predicting a low probability for the correct class. Seo et al. \citep{seo2019learning} enhance the typical cross-entropy loss between predictions and ground truth with the cross entropy between the predictions and the uniform distribution, forcing the network to construct as uniform distribution as possible. Kumar et al. \citep{kumar2018trainable} design a kernel-based function, which can be optimized during training and serves as a surrogate for the calibration error.

	\subsection{Out-of-distribution detection}
	
	Although calibration allows for aligning predictive probabilities with model accuracy on in-distribution data, this technique may not be adequate for detecting OOD data \citep{leibig2017leveraging}. It was shown that out-of-distribution examples can have higher maximum softmax probabilities than in-distribution samples \cite{hendrycks2016baseline}. Multiple methods attempt to correct that by using postprocessing to adjust softmax outputs \cite{devries2018learning, liang2017enhancing}. Alternative approaches integrate generative models, such as Generative Adversarial Networks (GANs) or Variational Autoencoders (VAEs), into the training procedure to discriminate in-distribution from OOD data \citep{lee2017training, wang2017safer}. It was later argued that generative models might return high confidence for inputs outside of the class they are supposed to model \citep{nalisnick2018deep, hendrycks2018deep}. To avoid uncertain decisions, a rejection option was introduced into the classifier \cite{tewari2007consistency, carlini2017adversarial}. In the case of images, the authors of \cite{hein2019relu} enforce low confidence on OOD inputs with a procedure similar to adversarial training. These authors additionally prove that ReLU networks with softmax output produce high-confidence predictions far away from the training data. A detailed survey of OOD detection methods can be found in \cite{yang2021generalized}.

	\subsection{One-vs-all models}
	
	One-vs-all classification models are promising in estimating uncertainty prediction because they naturally encode the "none of the above" class. In most cases, the confidence score of an OVA classifier is calculated as the maximum class probability \citep{zadrozny2002transforming, shu2017doc, padhy2020revisiting}. Our paper shows that inspecting the probability of a single class is not sufficient for quantifying uncertainty because a high probability of at least two classes leads to overall high confidence of OVA. To reduce this negative effect, the authors of \cite{franchi2020one} build an ensemble of OVA and softmax models. As in a typical ensemble approach, the confidence is high only if both models agree on the predictions. OVA classifier can also be combined with distance-based logits \cite{padhy2020revisiting} which 
	take the density of data into account and, in consequence, work better for detecting OOD samples \cite{macedo2019isotropy}. 
	
	In contrast to these works, we combine the predictive probabilities of all classes to define the confidence score. The proposed confidence score represents the probability that the model returns a given class and does not return other classes. Consequently, the confidence is high if and only if one class has a high probability and the others are close to zero. To deal with in-distribution samples, we additionally introduce exponential calibration, which is capable of approximating any monotonic function.
	
	\section{Model Description}
	\label{sec:model}
	
	In this section, we provide a detailed description of the \our{} method. First, we formulate the classification task and explain the issues that arise when the distribution of the dataset changes. Then, we briefly describe the building blocks of the \our{} approach. The next subsection demonstrates how the proposed method outputs desirable confidence in out-of-distribution datasets. The last subsection describes exponential calibration as the final postprocessing step that aims to correct both the overconfidence of the classifier and the underestimation arising from a multiplicative form of the \our~confidence score.
	
	\subsection{Problem statement} 
	
	We consider a multi-class classification problem, in which every example $x \in \R^D$ is associated with a class $y \in \{1,\ldots,K\}$. Given the input $x$, we use a neural network to model the predictive distribution $p_\theta(y|x)$ over the classes, where $\theta$ are the parameters of the neural network. Throughout the paper, we drop the index $\theta$ to simplify the notation.
	
	In practice, the neural network is trained on a subset $X_{in} \subset \R^D$, which represents the in-distribution (domain) data. In consequence, during training the model learns the in-domain distribution $p(y|x, x \in X_{in})$, which might not generalize well to the out-of-distribution (OOD) images $X_{out}$. Even if the model is evaluated on the in-distribution data, we frequently observe overconfidence, which means that the model is not well calibrated. In other words, the confidence of the model related to the predictive distribution significantly exceeds its accuracy. 
	
	We are interested in constructing a model whose predictive distribution is well calibrated to the model accuracy. This distribution has to be robust to dataset (covariate) shifts. Consequently, the predictive distribution should translate into a confidence score that indicates whether the model can make the correct decisions. Such a property allows one to deal with OOD data.
	
	\begin{figure}[h!]
		\centering
		\includegraphics[width=0.6\textwidth]{ 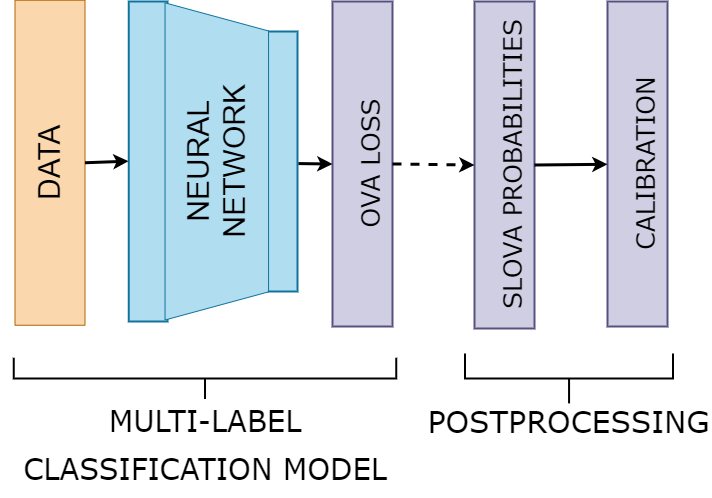}
		\caption{\our{} is a post hoc method consisting of a OVA classifier of choice (for example a pretrained neural network) and a post-processing step where the OVA predictive probabilities are transformed to \our{} calibrated probabilities. The \our{} probabilities reflect the confidence of in-distribution and out-of-distribution data.}
		\label{fig:intro2}
	\end{figure}
	
	\subsection{Model overview} 
	
	To achieve these goals, we propose the \our~ model shown schematically in Figure \ref{fig:intro2}. It is a two-stage post hoc method consisting of: a) a One-Vs-All (OVA) classifier and b) a postprocessing step that transforms OVA into \our{} probabilities.
	
	Firstly, we employ the OVA classification model, in which the predictive distribution of a given class is independent of other classes. OVA obtains comparable accuracy to softmax models but is much more flexible and can be used in multi-label classification problems, where the example can be labeled with more than one class. Since we are working in a single-label situation (only one class is correct), we redefine the predictive probabilities of OVA.
	Secondly, \our{} transforms the probabilities by compensating for values of other classes, reducing certainty when the model returns multiple classes or does not return any class (as depicted in Figure \ref{fig:intro}).
	In Theorem \ref{thm:main}, we prove that \our{} improves the confidence score on OOD data.
	
	While the introduced upgrade allows us to return low confidence for OOD data, we additionally need to calibrate the returned probabilities to match the model confidence with its accuracy. For this purpose, we introduce the exponential calibration function, whose form suits well the multiplicative form of \our{} predictive probabilities.
	
	The following parts describe the details of the proposed model.
	
	\subsection{Multi-label One-Vs-All classifier} 
	
	Let us recall that the predictive probabilities in OVA are described by $K$ sigmoid functions:
	\begin{equation*}
		p(k|x) = \frac{1}{1+\exp(-f_k(x))},
	\end{equation*}
	where $f(x)=[f_1(x),\ldots,f_K(x)]$ represents the network embedding of data point $x \in \R^D$ into $K$-dimensional logit space. The $k$-th sigmoid defines an individual Bernoulli distribution representing the probability that $x$ belongs to the $k$-th class. Since OVA allows for encoding more than one positive label for a given example, it is commonly used in multi-label problems. In the case of a typical single-label classification, where exactly one label is correct, OVA is trained by minimizing the standard cross-entropy function:
	\begin{equation}
		\begin{aligned} \label{eq:loss}
			L(x,y) & = -\log \left( p(y|x) \prod_{k \neq y} (1-p(k|x)) \right)\\
			& = -\log p(y|x) - \sum_{k \neq y} \log (1- p(k|x)),
		\end{aligned}
	\end{equation}
	for input $(x,y)$.

	The confidence score of the softmax model (or other single-label models) is usually defined as the maximal class probability \citep{guo2017calibration}. Following this idea directly, previous approaches to using OVA for uncertainty estimation \citep{padhy2020revisiting, shu2017doc} define OVA confidence by
	\begin{equation}\label{eq:confold}
		\confold(x) = \max_{k=1,\ldots,K} p(k|x).
	\end{equation}
	Observe, however, that the above formula does not take into account the fact that precisely one label is correct for every example. As mentioned, OVA suits well to multi-label situations and can return a high probability for more than one class. If more than one class has a high probability in single-label classification, then the model confidence should be low because the model is unsure which class to pick. Nevertheless, the formula \eqref{eq:confold} ignores such a situation and returns high confidence if at least one class is highly probable.
	
	\subsection{\our{} confidence} 
	
	Motivated by the above reasoning, we calculate the correct value of the probability that $x$ belongs to the $k$ th class in the single-label case. This is the probability that the model assigns $x$ to the $k$-th class and not to any other classes, which is given by:
	\begin{equation} \label{eq:pred}
		\Pour{}(k|x) = p(k|x) \prod_{j \neq k} (1-p(j|x)).
	\end{equation}
	This formula uses the information that we work in a single-label case and is consistent with the loss function \eqref{eq:loss}. Consequently, the \our{} confidence score is defined by:
	\begin{equation} \label{eq:conf}
		\confour(x) = \max_{k=1,\ldots,K} \Pour{}(k|x).
	\end{equation}
	This score is high if only one class has high probability while the others are negligible.

	\begin{remark} \label{rem:multi}
		Our idea of confidence for an OVA classifier can also be extended to the multi-label case. Recall that the OVA classifier returns a positive label at the $k$-th position if $p(k|x) \geq 1/2$. As a consequence, the confidence score in multi-label situation can be naturally defined by:
		$$
		\prod_{k: p(k|x) \geq 1/2} p(k|x) \cdot \prod_{j: p(j|x) < 1/2} (1-p(j|x)).
		$$
		In the extreme case, the OVA classifier may not return any label if $p(k|x) < 1/2$ for all $k=1,\ldots,K$. The probability that "none class is correct" is thus given by:
		\begin{equation*}
			\Pour{}(\mathrm{none}|x) = \prod_{k=1}^K (1-p(k|x)).
		\end{equation*}
		
		We leave the further discussion on quantifying the confidence in multi-label case for future work.
	\end{remark}
	
	In the next subsection, we continue with a theoretical analysis which shows that the \our{} method correctly outputs low confidence scores when evaluated on the out-of-distribution data. 
	
	\subsection{Theoretical analysis on OOD data}
	
	It is well known that softmax models tend to have overconfident predictions \cite{guo2017calibration}, especially for examples far away from the data distribution. In particular, it was shown that the confidence of neural networks with ReLU activation and softmax output converges to 1 for OOD data \cite[Theorem 3.1]{hein2019relu}. It partially follows from the fact that the softmax model always spreads the whole probability mass to all classes because there is no event like "none class is correct". 
	
	We show that such a limiting behavior does not hold for \our{} \eqref{eq:conf}, but is likely to occur for the typical OVA confidence score \eqref{eq:confold}.
	
	\begin{theorem} \label{thm:main}
		Let $f$ be a neural network with ReLU activation functions on hidden layers and sigmoid output implementing $K$-class OVA model. We assume that the distribution of classes is uniform. 
		
		Then for almost all $x \in \R^D$, we have $\confour(\alpha x) \to 1$ with probability $\frac{1}{2^K}$, as $\alpha \to \infty$. In the same case, $\confold(\alpha x) \to 1$ with probability $1-\frac{1}{2^K}$, as~$\alpha \to \infty$.
	\end{theorem}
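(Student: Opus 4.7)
The plan is to combine the piecewise-affine structure of ReLU networks, in the spirit of \cite{hein2019relu}, with a symmetry argument stemming from the uniform class prior. I would split the argument into a deterministic reduction to the sign pattern of the asymptotic logit slopes, followed by a probabilistic count of the favourable sign patterns.

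First, I would invoke the standard fact that a ReLU network is continuous piecewise affine on $\R^D$ with only finitely many affine pieces. Hence for almost every direction $x\in\R^D$ there is a threshold $\alpha_0(x)$ such that the tail ray $\{\alpha x:\alpha>\alpha_0\}$ lies inside a single unbounded polyhedral region, on which
\begin{equation*}
    f_k(\alpha x) = \alpha\, u_k(x) + c_k(x), \qquad k = 1,\dots,K,
\end{equation*}
for scalars $u_k(x),c_k(x)$; the exceptional set $\{x:u_k(x)=0\text{ for some }k\}$ lies in a finite union of hyperplanes and is negligible. Since $\sigma$ saturates at $\pm\infty$, each class probability converges to $p(k|\alpha x)\to\mathbbm{1}[u_k(x)>0]$, and by continuity of the max and of the finite products in \eqref{eq:pred},
\begin{align*}
    \confold(\alpha x) &\to \mathbbm{1}\bigl[\exists\,k:\,u_k(x)>0\bigr], \\
    \confour(\alpha x) &\to \max_k\mathbbm{1}[u_k(x)>0]\prod_{j\neq k}\mathbbm{1}[u_j(x)<0].
\end{align*}
Both limiting events are therefore determined entirely by the sign pattern $s(x) := (\mathrm{sgn}\,u_1(x),\dots,\mathrm{sgn}\,u_K(x))\in\{-,+\}^K$.

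Second, under the uniform-class assumption I would argue that each of the $2^K$ sign patterns occurs with the same probability $2^{-K}$. The OVA loss \eqref{eq:loss} is permutation-invariant in the output coordinates whenever the marginal class frequencies are equal, so the law of $u(x)$ induced through training on such a distribution is exchangeable in $k$. A per-coordinate symmetry between the positive and negative examples entering the binary cross-entropy term for class $k$ (balanced total weight) yields the marginal $\Pr(u_k(x)>0)=1/2$. Combining exchangeability with this marginal forces the uniform law on $\{-,+\}^K$, and the two stated probabilities then fall out by enumerating the favourable sign patterns associated with each limiting event.

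The hardest step will be the probabilistic one. The deterministic reduction is essentially a transcription of Theorem~3.1 of \cite{hein2019relu} from softmax to sigmoid outputs and is routine. Turning the symmetry of the loss into a statement about the joint law of $s(x)$ for a single trained network is more delicate, because after training the weights are deterministic and the randomness must be injected upstream. I would handle this by viewing the probability as taken jointly over training data and initialisation, both of which are invariant under class relabelling precisely when the class prior is uniform; the law of $s(x)$ then inherits exchangeability, after which the Bernoulli$(1/2)$ marginal finishes the count. The uniform-class hypothesis is therefore structurally used, not cosmetic.
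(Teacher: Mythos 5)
Your deterministic reduction is exactly the paper's: restrict to the terminal affine region of the ray (Lemma~3.1 of \cite{hein2019relu}), let the sigmoids saturate, and observe that both limits are determined by the sign pattern of the asymptotic slopes $u_k(x)$. That half is fine. The gap is in your probabilistic step. The implication ``exchangeable law on $\{-,+\}^K$ with $\Pr(u_k(x)>0)=1/2$ marginals $\Rightarrow$ uniform law on $\{-,+\}^K$'' is false: the exchangeable distribution putting mass $1/2$ on the all-positive pattern and $1/2$ on the all-negative pattern has fair marginals and is as far from uniform as possible. Uniformity requires \emph{independence} of the $K$ coordinate signs, and neither permutation symmetry of the loss under class relabelling nor balance of positives and negatives per class gives you that; the logits $f_1,\dots,f_K$ share all hidden layers, so their asymptotic slopes are strongly coupled in general. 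The paper does not attempt your derivation at all: it takes ``the distribution of classes is uniform'' to \emph{mean} that the $2^K$ limiting sign configurations are equiprobable, and then counts. You are therefore trying to prove something strictly stronger than what the paper assumes, and the proof of that stronger claim breaks at this specific step.

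A second point concerns the count itself. Under the uniform model on $\{-,+\}^K$, the event $\confour(\alpha x)\to 1$ is ``exactly one coordinate is positive,'' which comprises $K$ of the $2^K$ patterns and hence has probability $K/2^K$, not $1/2^K$; only the event that a \emph{fixed} class is the unique positive one has probability $1/2^K$. Your assertion that the stated constants ``fall out by enumerating the favourable sign patterns'' does not survive the enumeration. (The paper's own proof also reports $1/2^K$ for this event, so the discrepancy is shared with the source rather than introduced by you; but since you explicitly rest the constant on the enumeration, you would have to either report $K/2^K$ or restrict to a fixed class.) The count $1-1/2^K$ for $\confold$ is consistent under both treatments.
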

	
	\begin{proof}
		Let us first recall that a feedforward neural network $f:\R^D \ni x \to (f_1(x),\ldots,f_K(x)) \in \R^K$ that uses piecewise affine activation functions (e.g. ReLU, leaky ReLU) and is linear in the output layer, can be rewritten as continuous piecewise affine functions. Thus there exists a finite set of polytopes $\{Q_l\}_{l=1}^R$ such that $\bigcup_{l=1}^R Q_l = \R^D$ and $f(x) = V^lx+a^l$ is the piecewise affine representation of the output of a ReLU network on $Q_l$.
		
		By \cite[Lemma 3.1]{hein2019relu}, there exists a region $Q_t$ with $t \in \{1,\ldots,R\}$ and $\beta$ such that for all $\alpha \geq \beta$ we have $\alpha x \in Q_t$. Let $f(x) = V^t x + a^t$ be the affine form of the ReLU classifier $f$ on $Q_t$. By making use of \cite[Theorem 3.1]{hein2019relu} and the fact that the sigmoid is a special case of the softmax function for two classes, we get that the sigmoid output $p(k|\alpha x)$ is arbitrarily close either to 1 or 0, for all $k=1,\ldots,K$ and sufficiently large $\alpha$. 
		
		Observe that $\confour(\alpha x) \to 1$ iff. there exists exactly one $k$ such that $p(k|\alpha x) \to 1$ and $p(i|\alpha x) \to 0$, for all $i \neq k$, as $\alpha \to \infty$. Assuming a uniform distribution of classes, this situation occurs with probability $\frac{1}{2^K}$. On the other hand, $\confold(\alpha x) \to 1$, as $\alpha \to \infty$, with probability $1-\frac{1}{2^K}$, because the probability that all $K$ sigmoids converge to 0 equals $\frac{1}{2^K}$.
	\end{proof}
	
	\begin{figure}[t]
		\centering
		\includegraphics[width=0.8\linewidth]{ 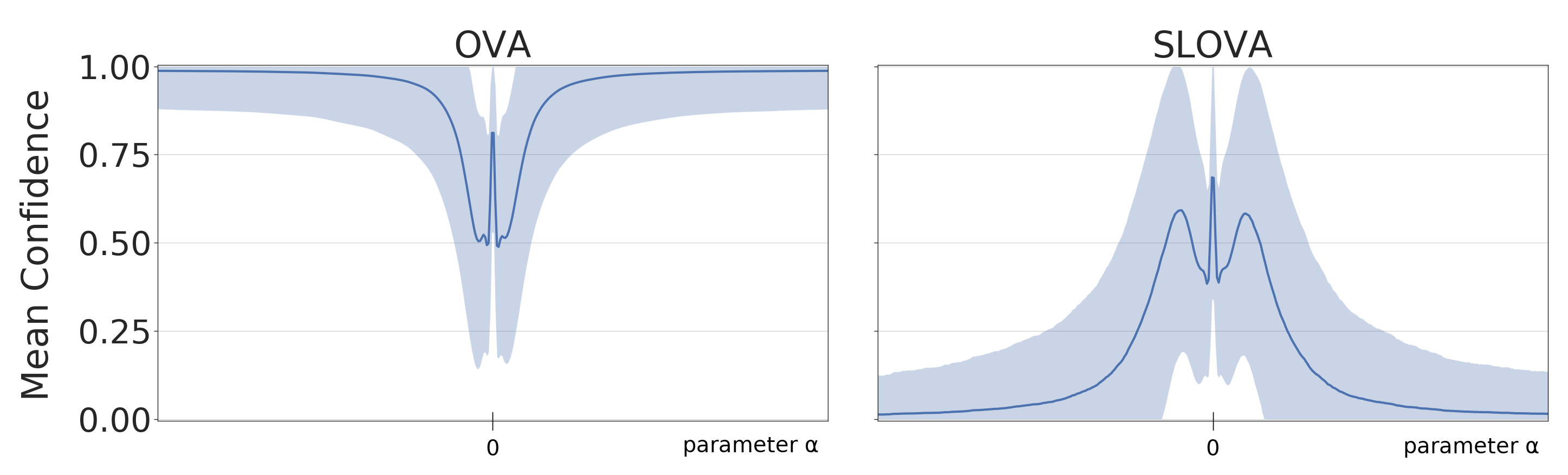}
		\caption{Confidence scores of OVA and \our{} on OOD data illustrating Theorem \ref{thm:main}. We show both the mean and the standard deviation of confidence scores for both models evaluated at an input $x + \alpha x_{\textrm{rand}}$ composed of a CIFAR-10 test point $x$ and a random vector $x_{\textrm{rand}}$. The $\alpha$ parameter plotted on the horizontal axis controls the strength of the distributional shift. In contrast to OVA, whose confidence saturates at high values of $\alpha$, the confidence of \our{} converges to 0 on OOD data as expected.}
		\label{fig:random_dir_mean}
	\end{figure}

	To illustrate the implications of the above result, let us consider a typical task with 10 classes such as the CIFAR-10 dataset. For OOD data $x \in X_{out}$, which are sufficiently distant from in-distribution samples, we have $\confour(x) \approx 1$ with probability $\frac{1}{2^{10}} < 0.001$. To construct such OOD samples it is sufficient to select almost any direction $z \in \R^D$ and put $x = \alpha z$, where $\alpha \in \R$ is a sufficiently large scalar. For analogical OOD samples, $\confold(x) \to 1$ with probability greater than 0.999. Consequently, a typical OVA classifier behaves analogously to a softmax classifier, while \our{} does not inherit this drawback.
	
	Figure \ref{fig:random_dir_mean} shows the mean confidence of OVA and \our{} for inputs localized on a fixed 1-dimensional affine subspace $x + \alpha x_{\textrm{rand}}$. The middle point $\alpha=0$ is a test sample $x$, which represents the in-distribution data. The plot is averaged over many random subspaces $x_{\textrm{rand}}$. It is evident that OVA, on average, returns high confidence for large $\alpha$, while \our{} converges to a small value. Observe the rapid drop of confidence around the test point for OVA and \our{}, which is caused by the decrease in the predictive probability of the dominant class (the probabilities of other classes are still small). Next, the predictive probabilities of some other classes are increasing, resulting in a slight increase of \our{} confidence. Finally, we observe a decrease of \our{} confidence because predictive probabilities of most classes converge to 0 while others converge to 1. OVA confidence converges to 1 since it relies only on the dominant class. Analogical comparison for the two-dimensional subspace is presented in Figure \ref{fig:random_plane_mean}.
	
	\subsection{Exponential calibration}
	\label{sec:expcal}
	We now describe calibration as the final postprocessing step of the \our~method. Using $\Pour{}(k|x)$ as predictive probabilities for $k=1,\dots,K$ reduces the overconfidence of a typical OVA classifier when more than one class has a high probability. However, $\sum_{k=1}^K \Pour{}(k|x) \leq 1$ because various multi-label events are additionally encoded; see Remark \ref{rem:multi}. As a consequence, there exists a discrepancy between the predicted probabilities (confidence) and the true probabilities (accuracy). The middle point in Figure \ref{fig:random_dir_mean} illustrates the underestimation of \our{} predictive probabilities on in-distribution data. Rather than normalizing the obtained predictive probabilities, which is only a partial solution, we instead use calibration techniques that align the expected confidence with accuracy.

	To formally define the calibration, let $\hat{p}(k|x)$ be the probability that the model returns the class $k$ for the input example $x$ and let $\mathbb{P}$ be the \emph{true} probability of classes. We say that the model is calibrated if the true probability $\mathbb{P}$ of class $k$ given the output probability $\hat{p}$ coincides with the output probability for that class:
	\begin{align}
		\label{eq:calform}
		\mathbb{P}\left (k | \hat{p}(x) \right ) = \hat{p}(k|x) \text{, for } k \in \{1,\ldots,K\}.
	\end{align}
	
	A typical way to calibrate the classification model relies on transforming the output probabilities $\hat{p}$ into calibrated ones $p$. Based on the multiplicative form of $\Pour{}$, we use the exponential calibration, which, we argue in \ref{appendix:expcal}, is capable of "inverting" the multiplications. Formally, we use the following parametric transformation:
	\begin{align}
		\label{eq:calibr}
		p_k = c(\hat{p}_k) = \sum_{i=1}^M \beta_i {\hat{p}_k}^{\alpha_i},
	\end{align}
	where $\alpha_i >0, \beta_i \in (0,1)$ are trainable parameters such that $\sum_{i=1}^M \beta_i = 1$. Due to the choice of $\alpha_i, \beta_i$ in the above ranges, the introduced transformation is monotonic and is a map on the unit interval. In the \ref{appendix:expcal} we argue that the exponential map is a natural choice to invert the multiplicative character of \our{} probabilities. Moreover, it was shown that the above exponential transformation is capable of approximating any completely monotonic function \citep{kammler1979least, mcglinn1978uniform}. 
	
	To calibrate \our{}, we form a calibration training set $X_{cal}$ from the validation dataset $X_{val}$. For each example $(x,y) \in X_{val}$, we gather all $K$ pairs of uncalibrated output probabilities and their corresponding one-hot encoded binary labels $\left ( \Pour{}(k|x), \delta_{k,y} \right ) \in X_{cal}$, which results in $|X_{cal}| = K|X_{val}|$ pairs in total.
	
	Next, we sort the pairs by the output probabilities $\Pour{}$:
	\begin{align*}
		\left \{ 
		\begin{array}{ll}
			P \vcentcolon= & \textrm{sort}(\Pour{}(k|x)),\\
			D \vcentcolon= & \textrm{sort}_{\Pour{}(k|x)}(\delta_{k,y}),
		\end{array}
		\right . ,\quad 
		k \in (1...K), \quad  (x,y) \in X_{val};
	\end{align*}
	and apply sliding windows of size $n$ to compute moving averages of the sorted data:
	\begin{equation*}
		\overline{\pi}_i = \frac{1}{n} \sum_{j=i}^{i + n-1} P_j\mathrm{, } \quad \overline{b}_i = \frac{1}{n} \sum_{j=i}^{i + n-1} D_j.
	\end{equation*}
	This results in $N = |X_{cal}| - n + 1$ averaged points, and we fix the window size $n=|X_{cal}|/100$ so that there are enough points within each window. Intuitively, $\overline{b}_i$ is the fraction of positive labels, while $\overline{\pi}_i$ represents the average confidence of the model in that window. Lastly, we select an equally spaced subset of the averaged pairs $(\overline{\pi},\overline{b})$ so that the total number of pairs in the dataset is $n_b$. The parameters $\alpha,\beta$ from \eqref{eq:calibr} are directly optimized by minimizing the mean square error:
	\begin{equation*}
		\sum_{i=1}^{n_b} (c(\overline{\pi}_i) - \overline{b}_i)^2.
	\end{equation*}
	
	Note that calibration methods are normally trained and evaluated on in-distribution data. This means that calibration usually does not improve and can actually increase confidence on out-of-distribution data. To prevent this, we additionally augment $X_{val}$ with random noise samples that are labeled "none", that is, $\delta_k = 0$ for $k \in (1..K)$ (only zeros in one-hot representation). We use $[0, 1]$ uniform distribution for the image data as we find that it works adequately and add $\lfloor0.1|X_{val}|\rfloor$ of such samples.
	
	The choice of hyperparameter $M$ affects the capacity and computational cost of the transformation. Note, however, that the cost is negligible compared to the cost of the forward pass of the base model. In \ref{appendix:calstability} we provide stability studies of exponential calibration as we vary the number of adjustable parameters $M$ and the size of the calibration dataset $n_b$.
	
	\section{Experiments}
	\label{sec:experiments}
	To evaluate both the confidence calibration and the out-of-distribution aspects of uncertainty estimation, we focus on three tasks presented in the following sections. First, we measure the model calibration on in-distribution data. Then we repeat the same experiment under distribution shift. We then proceed to examine the behavior of our model on out-of-distribution data. Lastly, we analyze the significance of the calibration step in the final section.
	
	\subsection{In-distribution and shifted data}
	\label{sec:expshift}
	\paragraph{Setup}
	To examine the model calibration on in-distribution and distorted data, we follow a recent benchmark \cite{ovadia2019can} and compare \our{} with the following baselines:
	\begin{itemize}
		\item \textit{Softmax}: Maximum softmax probability \citep{hendrycks2018benchmarking},
		\item \textit{Temp Scaling}: Softmax with calibration defined by the temperature scaling \citep{guo2017calibration},
		\item \textit{Dropout}: Monte Carlo Dropout \citep{gal2016dropout},
		\item \textit{Ensembles}: Ensemble of $M$ neural networks trained independently on the entire dataset \citep{lakshminarayanan2016simple},
		\item \textit{SVI}: Stochastic Variational Bayesian Inference for deep learning \citep{blundell2015weight, graves2011practical},
		\item \textit{LL SVI}: Mean field stochastic variational inference on the last layer only \citep{riquelme2018deep},
		\item \textit{LL Dropout}: Dropout applied only to activations before the last layer \citep{riquelme2018deep},
		\item \textit{OVA DM}: OVA classifier with distance-based logits \citep{padhy2020revisiting}.
	\end{itemize}
	
	In this setting, we train all models on the CIFAR-10 train set and subsequently calibrate them on the validation set. In-distribution calibration is evaluated on the original test set. To investigate the robustness under dataset shift, we follow Hendrycks \& Dietterich \cite{hendrycks2018benchmarking} and use CIFAR-10-C produced by applying 16 types of distortions to CIFAR-10 images, each taken with 5 degrees of intensity.
	
	We use the ResNet20 architecture \citep{he2016deep} that achieves state-of-the-art results on typical computer vision tasks, but is otherwise an arbitrary choice to compare methods. We train it with the Adam optimizer \citep{kingma2014adam} for $250$ epochs with batch size $512$. The train dataset is augmented by random crops and rotations, the learning rate is piecewise constant with two $0.1$ drops in epochs $125$ and $185$. Before training, we perform a random hyperparameter search of $20$ trials with learning rates sampled uniformly from an interval $[ 10^{-3},1]$, momentum $\beta_1 \in [0.85,0.99]$, and stability parameter $\epsilon \in [10^{-8},10^{-5} ]$. Calibration was performed for $M=20$ parameters trained for $20$ epochs with the size of the calibration dataset $n_b=4000$.
	\begin{figure*}[h!]
		\centering
		\includegraphics[width=\linewidth]{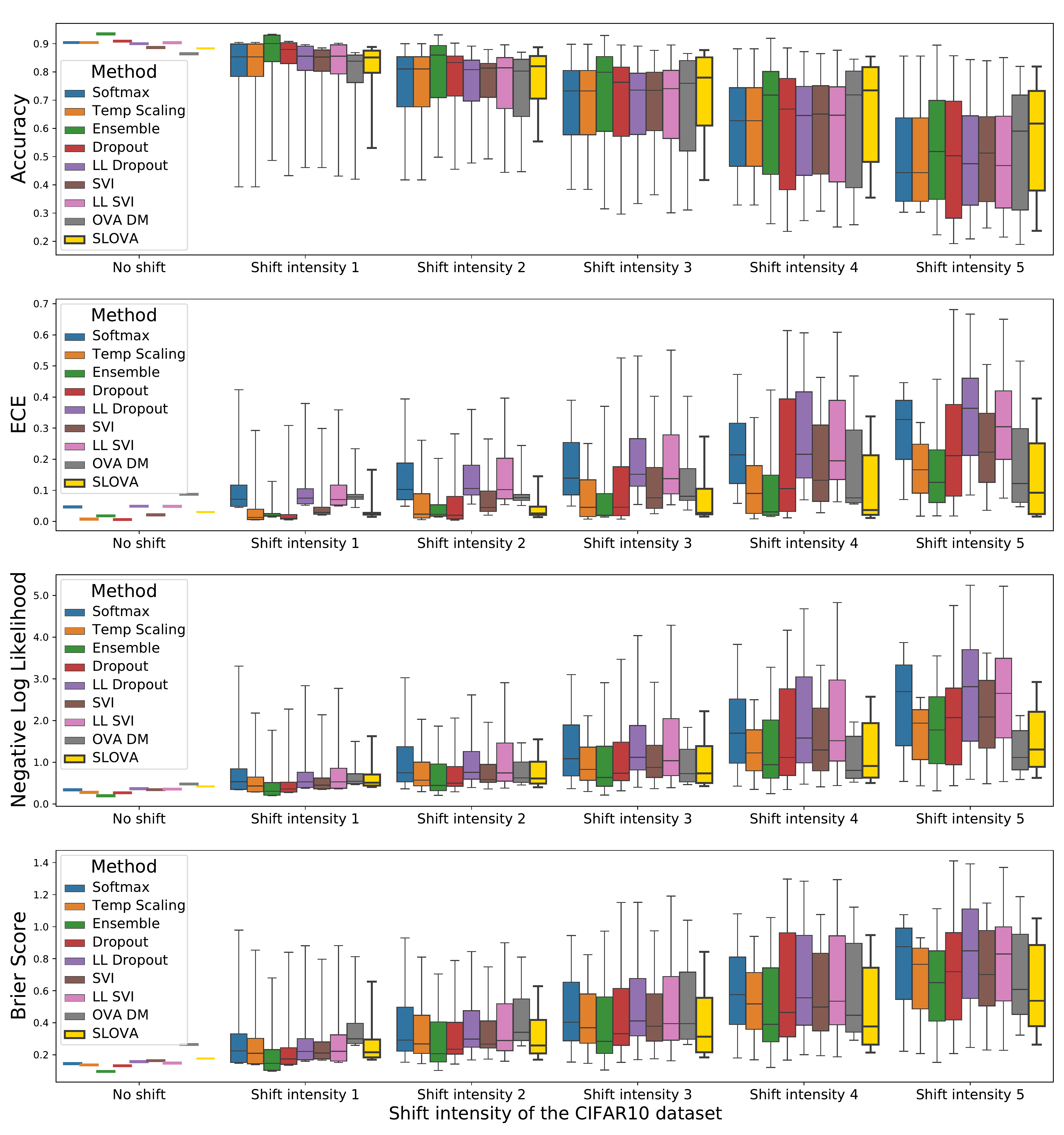}
		\caption{Accuracy and three uncertainty-aware metrics: ECE, NLL, and Brier score for the CIFAR-10 and its distorted version. On the horizontal axis we group model outputs with 5 degrees of shift intensity constructed following Hendrycks \& Dietterich \cite{hendrycks2018benchmarking}. Despite its simplicity, \our{} retains high accuracy and low Brier score across all shift intensities while ECE and NLL are comparable to more complex state-of-the-art models. Based on these results, in Figure \ref{fig:stattest-shift} we conduct a statistical performance test which ranks \our{} as the second-best giving up the place only to the computationally demanding ensembles. The proposed model is especially robust for higher values of shift severity where other models, in general, are prone to larger drops of performance.}
		\label{fig1:cifar10}
	\end{figure*}
	
	\paragraph{Metrics}
	We investigate the classification accuracy as well as three calibration-related metrics. The Expected Calibration Error (ECE) \citep{naeini2015obtaining} measures the discrepancy between predicted probabilities and empirical accuracy by grouping model predictions into $M$ interval bins $B_1,\ldots,B_M$ and calculating the expected difference between the accuracy $\acc(B_i)$ and confidence $\conf(B_i)$ over the bins:
	\begin{equation*}
		ECE = \sum_{i=1}^M \frac{|B_i|}{M} |\acc(B_i) - \conf(B_i)|.
	\end{equation*}
	We also consider the Brier score \citep{brier1950verification} -- the squared error of the predicted probability vector and the one-hot encoded true response; and
	the negative log-likelihood (NLL), which is also a proper scoring rule \citep{gneiting2007strictly}, but a less objective one, because it can overemphasize tail probabilities \citep{quinonero2005evaluating}.
	
	\paragraph{Results}
	The results presented in Figure \ref{fig1:cifar10} demonstrate that \our{} gives comparable accuracy and calibration scores to vanilla softmax on in-distribution data, but is more robust to dataset shifts. At the same time, it presents a notable improvement over softmax and its calibrated variant (Temperature Scaling) on highly distorted data in terms of all metrics. In the case of high intensity dataset shifts (level 4 and 5), the calibration scores (ECE, NLL, Brier) of \our{} are frequently better than the ones obtained by ensembles, which is currently considered the state-of-the-art method in uncertainty modeling. Note that complex methods, such as Ensemble, are significantly more computationally demanding than \our{}. While \our{} relies on using a single OVA model, Ensemble requires training and evaluation of multiple classifiers, which are also trained using adversarial examples. 
	
	The advantage of \our{} on highly distorted data can be explained by the fact that \our{} is designed not only to improve predictive uncertainty but also to detect OOD data. Strong shifts pushed the data far away from the true data distribution, which is similar to creating OOD samples. This experiment confirms our thesis that most methods are created to solve only one particular problem -- in-distribution calibration, robustness under dataset shifts, or OOD detection. In contrast, \our{} is the only method that gives satisfactory results on all of these tasks. 
	
	It is also worth noting that \our{} compares favorably with OVA DM, which is an upgraded version of the OVA classifier. OVA DM performs poorly for the in-distribution data (drops in accuracy observed both in our experiments and in the original article \cite{padhy2020revisiting}), a weakness that our method does not share. Nevertheless, in the case of high intensity dataset shifts, the performance of OVA DM is still lower than \our{}, which works well in all cases. 
	
	\paragraph{Statistical analysis}
	
	\begin{figure}[h!]
		\centering
		\includegraphics[width=1\textwidth]{ 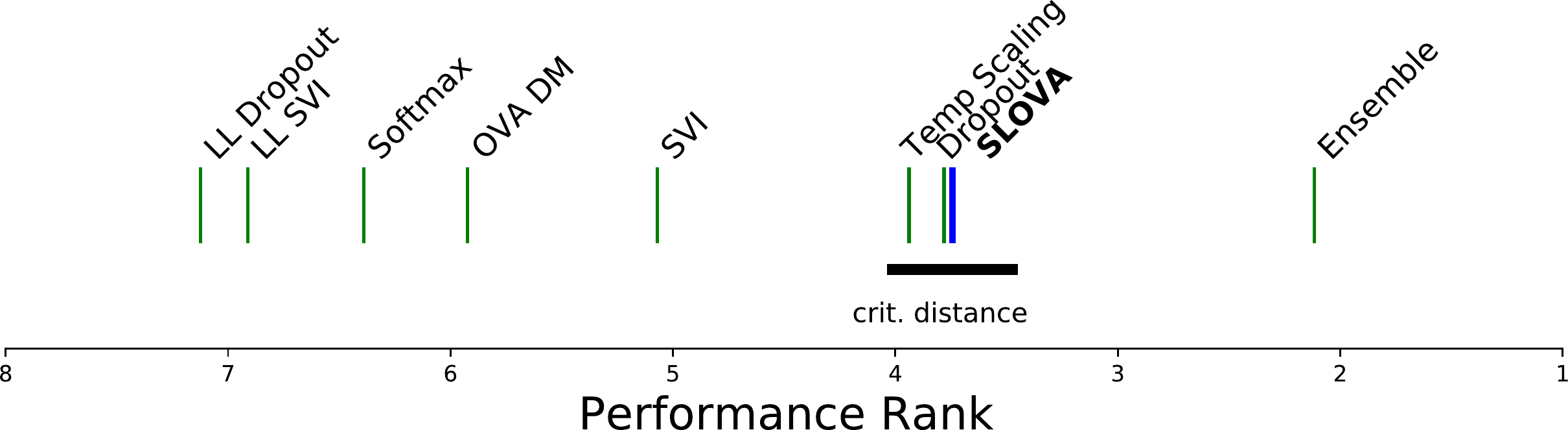}
		\caption{Visualization of statistical comparison performed on in-distribution calibration and dataset shifts. Methods which lie inside the critical distance around \our{} are statistically indistinguishable.}
		\label{fig:stattest-shift}
	\end{figure}
	
	To support our analysis, we perform statistical tests to verify the statistical difference between algorithms. Specifically, we combine the nonparametric statistical Friedman test with the Bonferroni-Dunn post hoc analysis \cite{demvsar2006statistical}. To this end, we aggregated the results of all the methods from Figure \ref{fig1:cifar10}, giving 81 in total. Next, we ranked the methods according to their performance. Given this ranking, the analysis consists of two steps:
	\begin{itemize}
		\item The null hypothesis is made that all methods perform equally and the observed differences are merely random. The hypothesis is tested by the Friedman test, which follows a $\chi^2$ distribution.
		\item Having rejected the null hypothesis, the differences in ranks are analyzed by the Bonferroni-Dunn test. The Bonferroni-Dunn tests are applied only between \our{} and other methods. 
	\end{itemize}
	
	Figure \ref{fig:stattest-shift} visualizes the results for a significance level of $\alpha=0.05$. The x-axis shows the mean rank for each method. The thick horizontal line illustrates the critical distance. Methods for which the difference in mean rank is lower than the critical distance are statistically indistinguishable.
	
	It is evident that the mean rank of \our{} is comparable to \emph{Dropout} and \emph{Temp Scaling}. Since the distance between their mean ranks is lower than the critical distance, the difference between these methods is statistically insignificant. At the same time, \our{} is statistically better than other models, except \emph{Ensemble}, which performs best across all cases.

	\subsection{Out-of-distribution data}
	
	\paragraph{Setup}
	We train a classification model on one dataset for the OOD experiment and then evaluate it on other out-of-distribution datasets. We reuse the setup from \cite{hein2019relu} and compare \our{} with the following methods:
	\begin{itemize}
		\item \textit{Softmax}: Maximum softmax probability,
		\item \textit{CEDA}: Softmax network trained with an additional loss term that penalizes non-uniform answers on additional OOD train data \citep{hendrycks2018deep, hein2019relu},
		\item \textit{ACET}: Softmax network also trained with the same loss term, but with OOD examples being generated adversarially \citep{hein2019relu},
		\item \textit{ReAct}: Softmax network with penultimate activations being clipped to a threshold in evaluation \citep{sun2021react}.
	\end{itemize}
	
	A ResNet model is trained on each of the three commonly used datasets: CIFAR-10, CIFAR-100, and SVHN. Random crops are used for all datasets for training, while random mirroring is additionally used for CIFAR-10 and CIFAR-100. We evaluate each model on the two remaining datasets along with the LSUN (classroom subset) and the ImageNet datasets. All inputs are $32$x$32$ pixels except for the ImageNet dataset, where the images are scaled to $224$x$224$ pixels. For ReAct, we follow the original authors and set the threshold $c$ so that it corresponds to the value of the $90$-th percentile of all activations from the penultimate layer \citep{sun2021react}.
	
	For \our{} we slightly tune the original hyperparameters of the softmax model to avoid a large drop in the accuracy of the OVA model. This is easily achieved by altering the learning rate and the weight decay hyperparameter $\lambda$ - we use the learning rate $0.02$, $0.03$, $0.02$, and $\lambda$ $0.01$, $0.02$, $0.01$ for CIFAR-10, CIFAR-100, and SVHN, respectively. The remaining hyperparameters are exactly the same as in the setup from \citep{hein2019relu}.
	
	\begin{table}[h!]
		\centering
		\footnotesize
		\caption{Mean Maximum Confidence for the out-of-distribution (OOD) experiment. Low confidence on OOD data is desired and \our{} produces top results on 9 out of 12 cases.}
		\begin{tabular}{cccccc}
			\toprule
			\textbf{Method} & \textbf{Error} & \multicolumn{4}{c}{\textbf{Mean Maximum Confidence} }\\
			\midrule
			\multicolumn{2}{c|}{\textbf{SVHN}} & \textbf{CIFAR-10} & \textbf{CIFAR-100} & \textbf{LSUN} & \textbf{ImageNet}\\ \cline{3-6}
			Softmax & $3.53\%$ & $0.732$ & $0.730$ & $0.722$ & $0.568$ \\
			CEDA & $3.50\%$ & $0.551$ & $0.527$ & $0.364$ & $0.736$\\
			ACET & $3.52\%$ & $0.435$ & $0.414$ & $\textbf{0.148}$ & $0.421$ \\
			ReAct & $3.77\%$ & $0.671$ & $0.676$ & $0.673$ & $0.489$ \\
			\our{} & $3.29\%$ & $\textbf{0.376}$ & $\textbf{0.387}$ & $0.338$ & $\textbf{0.146}$ \\
			\midrule
			\multicolumn{2}{c|}{\textbf{CIFAR-10}} & \textbf{SVHN} & \textbf{CIFAR-100} & \textbf{LSUN} & \textbf{ImageNet}\\ \cline{3-6}
			Softmax  & $8.87\%$ & $0.800$ & $0.764$ & $0.738$ & $0.864$ \\
			CEDA & $8.87\%$ & $0.327$ & $0.761$ & $0.735$ & $0.660$ \\
			ACET & $8.44\%$ & $\textbf{0.263}$ & $0.764$ & $0.745$ & $0.242$\\
			ReAct & $9.19\%$ & $0.725$ & $0.733$ & $0.689$ & $0.713$ \\
			\our{} & $7.68\%$ & $0.644$ & $\textbf{0.640}$ & $\textbf{0.566}$ & $\textbf{0.180}$ \\
			\midrule
			\multicolumn{2}{c|}{\textbf{CIFAR-100}} & \textbf{SVHN} & \textbf{CIFAR-10} & \textbf{LSUN} & \textbf{ImageNet}\\ \cline{3-6}
			Softmax & $31.97\%$ & $0.570$ & $0.560$ & $0.592$ & $0.878$ \\
			CEDA & $32.74\%$ & $0.290$ & $0.547$ & $0.581$ & $0.542$ \\
			ACET & $32.24\%$ & $\textbf{0.234}$ & $0.530$ & $0.554$ & $0.601$ \\
			ReAct & $37.64\%$ & $0.426$ & $0.448$ & $0.510$ & $0.667$ \\
			\our{} & $32.01\%$ & $0.407$ & $\textbf{0.402}$ & $\textbf{0.386}$ & $\textbf{0.098}$ \\
			\bottomrule
		\end{tabular}
		\label{tab:ood_datasets_results}
	\end{table}

	\paragraph{Metrics}
	Similarly as in \cite{hein2019relu}, we report the test error and the mean maximal confidence (MMC) for each model. MMC is defined as the mean of maximal model predictions taken over all data points. In the case of \our{}, MMC is evaluated using $\max_{k=1,\ldots,K} \Pour{}(k|x)$,
	averaged over all $x$. Low MMC means low confidence, which is expected on OOD data.
	
	\paragraph{Results} 
	The results reported in Table \ref{tab:ood_datasets_results} demonstrate that \our{} gives significantly lower MMC scores than the typical softmax classifier on OOD detection task. It is interesting that ReAct, a recent OOD detector, performs only slightly better than softmax. At the same time, it is inferior to \our{} in all cases and has a significant reduction in classification accuracy for the model trained on CIFAR-100.
	
	Notably, ACET gives impressive scores in a few variants (SVHN/LSUN, CIFAR-10/SVHN, CIFAR-100/SVHN), but its results are very unstable. In particular, ACET trained on CIFAR-10 is able to almost perfectly detect SVHN and ImageNet samples, but it is unaware of examples from the CIFAR-100 and LSUN datasets. This behavior can be explained by the fact that ACET is trained using adversarial examples, which is a complex and quite difficult procedure to apply in practice. In this case, it performs even worse than CEDA, a similar method that uses noise instead of adversarial examples. 
	
	In terms of OOD detection, \our{} outperforms other methods in 9 out of 12 cases. Furthermore, this experiment shows that OVA classifiers provide accuracy comparable to that of softmax models in typical classification tasks. This is an important conclusion because most neural networks are based on softmax. Since \our{} performs similarly to softmax but is better in OOD detection and predictive uncertainty, it is safe to replace the softmax model with a \our{} classifier.
	
	\paragraph{Statistical analysis}
	
	\begin{figure}[h!]
		\centering
		\includegraphics[width=0.8\textwidth]{ 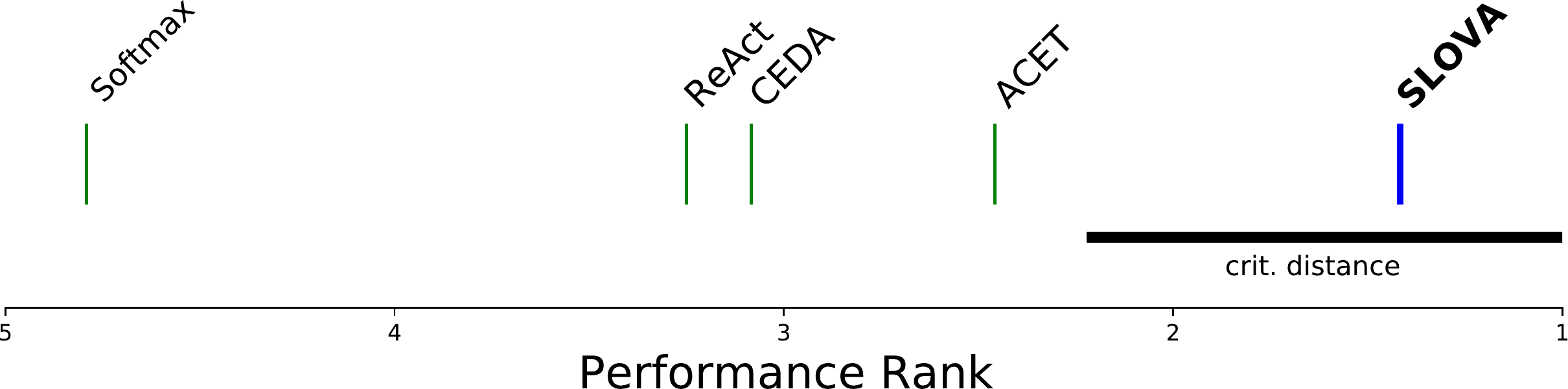}
		\caption{Visualization of statistical comparison performed on OOD detection. The difference between \our{} and other methods are statistically significant, which is indicated by the critical distance.}
		\label{fig:stattest}
	\end{figure}
	
	We repeat the statistical analysis for the OOD detection. Namely, we conduct the non-parametric statistical Friedman test with the Bonferroni-Dunn post hoc analysis for 12 combinations of datasets reported in Table \ref{tab:ood_datasets_results}.
	
	As can be seen in Figure \ref{fig:stattest}, \our{} received a rank close to 1, which confirms that it was superior in most cases. More importantly, the difference between \our{} and other models is statistically significant.

	\subsection{Ablation study} 
	
	We perform the following ablation study to analyze the impact of the proposed confidence score and the calibration separately. We group all the cases of in-distribution data and distributional shifts together (including shift intensities). First, we examine the standard OVA classifier, then \our{} without exponential calibration, and finally the complete \our{} model. For comparison, we also report the vanilla softmax model.
	
	\begin{figure}[h!]
		\centering
		\includegraphics[width=0.8\linewidth]{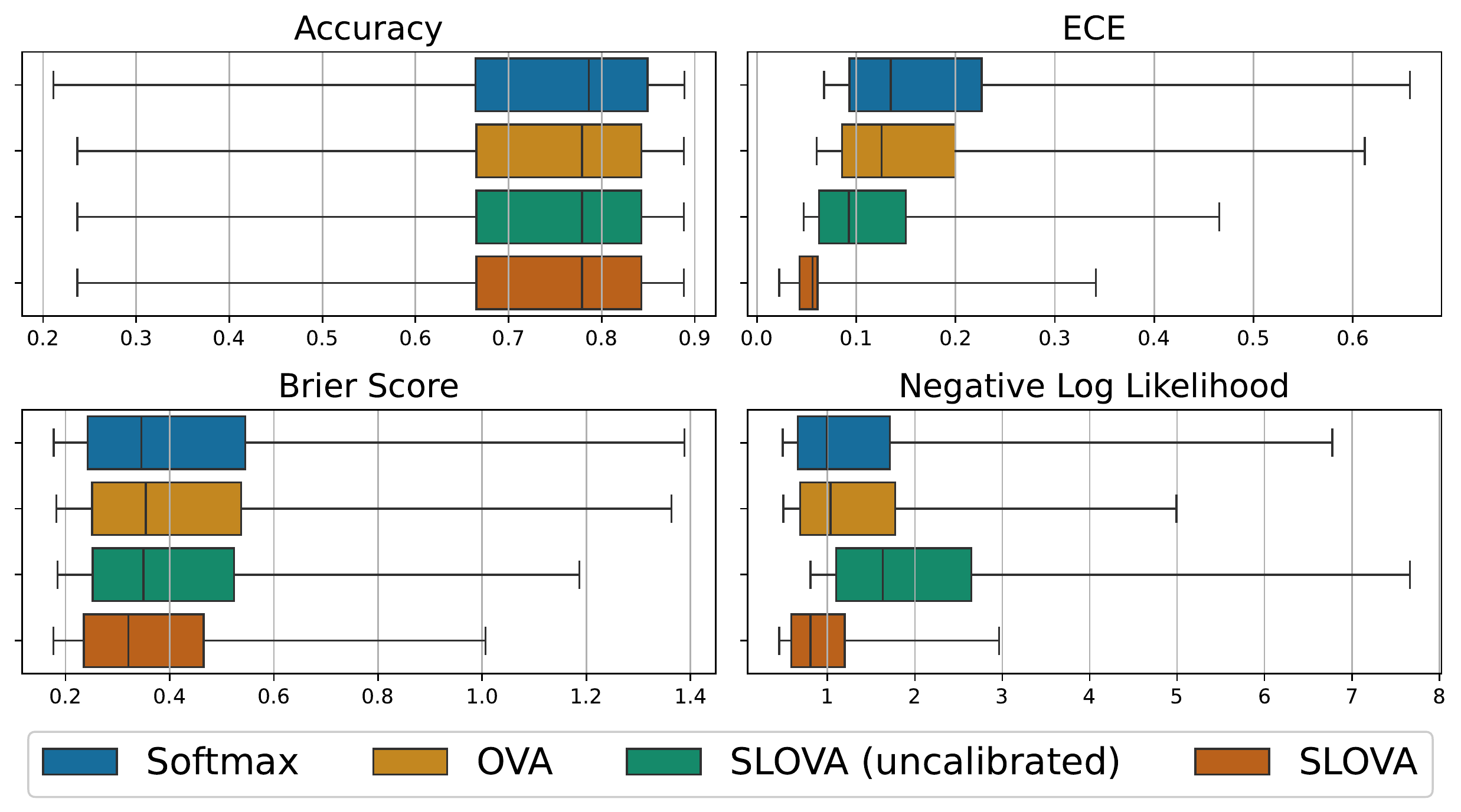}
		\caption{Ablation study on CIFAR-10-C testsets. It is evident that each step of our method successively improves the uncertainty estimation while preserving the model accuracy.}
		\label{fig3:ablation}
	\end{figure}
	
	The results reported in Figure \ref{fig3:ablation} confirm that subsequent modifications do not affect accuracy. This is expected behavior because both \our{} predictive probabilities and exponential calibration do not change the order of probability scores. As a consequence, the accuracy of the OVA model is preserved.
	
	It is evident that subsequent modifications of the OVA probabilities produced by \our{} successively improve the ECE score. The impact on the Brier score is smaller, but there is a slight positive trend. In the case of NLL, there is no evident trend, but a complete \our{} model obtains the best result with the lowest variance.
	
	The most significant difference observed in the case of the ECE metric can result from the fact that the exponential calibration was optimized using a type of ECE objective. Although all calibration metrics are related, the selected optimization objective is crucial for the final performance. In most studies, the ECE is considered the most reliable calibration metric, which motivated our selection. It is worth emphasizing that many methods (including temperature scaling) use log-likelihood to select calibration parameters because its optimization is significantly easier.

	\section{Conclusion}
	\label{sec:conclusion}
	\subsection{Summary}
	
	In this work, we presented \our{}, a simple and effective method of transforming OVA probabilities into reliable uncertainty predictions. Since OVA can be used in multi-label classification tasks, it has a natural ability to detect OOD samples by labeling them as "none of the above classes". As a consequence, \our{} goes beyond the properties of a typical softmax predictor and is better suited for modeling the distribution of OOD samples. We confirmed this claim theoretically. To quantify the uncertainty of the extended OVA classifier on in-distribution and shifted data, we introduced the exponential calibration, which allows us to align predictive probabilities with the accuracy of the model. To support our theses, we experimentally demonstrated that \our{} compares favorably against state-of-the-art methods on common benchmark tasks.
	
	\subsection{Discussion and limitations}
	
	The problem of uncertainty estimation can be divided into at least three separate tasks: in-distribution calibration, robustness under dataset shifts, and detection of OOD samples. Although all of these tasks are closely related, most methods focus on solving only one or two of them. We showed that combining exponential calibration with appropriate treatment of the OVA classifier gives a model that presents satisfactory performance on all of these problems. In particular, \our{} is one of the three best-performing methods on in-distribution calibration with optional dataset shifts. At the same time, it outperforms state-of-the-art baselines on OOD detection in most cases. In consequence, \our{} is a method that performs well in diverse applications, while other methods are directly designed only for a particular task.
	
	It is not obvious whether \our{} can be applied to the case of extreme classification. \our{} confidence score is defined as a product of OVA predictive probabilities. When the number of classes is high, but only one is correct, a single mistake can result in inaccurate or incorrect predictions. Such a disadvantage disappears in the multi-label classification, where more than one label is correct and we select the most probable combination of labels. A possible remedy in a single-label situation would be to project the class vector to a lower-dimensional space as it is commonly done in extreme classification \cite{yeh2017learning}.
	
	One could argue that a classification neural network implemented with the use of a softmax function is the standard approach that gives state-of-the-art performance. As a consequence, most of the available pre-trained models are softmax models. This is a considerable limitation, as even though \our{} is a post hoc method, it requires an OVA model. However, OVA classifiers usually give performance comparable to softmax, and both models can be trained effectively. In consequence, \our{} can be an alternative to softmax in applications where reliable uncertainty modeling is a desirable property.
	
	Finally, observe that Theorem \ref{thm:main} states that for ReLU activations, there is a $\frac{1}{2^K}$ probability of selecting a direction for which the network is overconfident with arbitrarily high $\alpha$. In practice, OOD samples can be distributed closer to in-distribution samples than $\alpha x$ with saturated confidence. Despite the limitations of these theoretical results, our experiments confirm that \our{} performs particularly well across several OOD benchmark tasks, strongly suggesting that the proposed method is useful for real-life use cases.

	\subsection{Future works}
	
	We showed that the predictive probability returned by the OVA classifier can be used to define confidence score in a single-label classification. As discussed in Remark \ref{rem:multi}, we can formulate an analogical confidence measure in a multi-label situation. In future work, we plan to explore this idea and verify it experimentally.
	
	Uncertainty estimation plays a crucial role in many areas of machine learning, including active learning, semi-supervised learning, or conditional computation. It would be interesting to verify whether the application of \our{} in these areas improves performance.
	
	An interesting direction for future work would be to improve the method by combining it with approaches that are orthogonal, and thus could be applied simultaneously. For example, it is well known that constructing an ensemble of networks improves the uncertainty scores \cite{lakshminarayanan2016simple}, yet the question of whether this fact is also true for \our{} remains unanswered. Another simple way to improve OOD detection performance is to expose the model to some OOD samples in the training process \cite{hendrycks2018deep}, and one could augment \our{} with this approach.

	\section*{Acknowledgements}
	The work of Bartosz Wójcik and Marek \'Smieja was supported by the National Science Centre (Poland) grant no. 2018/31/B/ST6/00993. Jacek Tabor carried out this work within the research project "Bio-inspired artificial neural networks" (grant no. POIR.04.04.00-00-14DE/18-00) within the Team-Net program of the Foundation for Polish Science co-financed by the European Union under the European Regional Development Fund. 
	
	% \section*{References}
	\bibliography{main}

\begin{thebibliography}{10}
\expandafter\ifx\csname url\endcsname\relax
  \def\url#1{\texttt{#1}}\fi
\expandafter\ifx\csname urlprefix\endcsname\relax\def\urlprefix{URL }\fi
\expandafter\ifx\csname href\endcsname\relax
  \def\href#1#2{#2} \def\path#1{#1}\fi

\bibitem{miotto2016deep}
R.~Miotto, L.~Li, B.~A. Kidd, J.~T. Dudley, Deep patient: an unsupervised
  representation to predict the future of patients from the electronic health
  records, Scientific reports 6~(1) (2016) 1--10.

\bibitem{levinson2011towards}
J.~Levinson, J.~Askeland, J.~Becker, J.~Dolson, D.~Held, S.~Kammel, J.~Z.
  Kolter, D.~Langer, O.~Pink, V.~Pratt, et~al., Towards fully autonomous
  driving: Systems and algorithms, in: 2011 IEEE Intelligent Vehicles Symposium
  (IV), IEEE, 2011, pp. 163--168.

\bibitem{berk2017impact}
R.~Berk, An impact assessment of machine learning risk forecasts on parole
  board decisions and recidivism, Journal of Experimental Criminology 13~(2)
  (2017) 193--216.

\bibitem{chen2019multi}
Z.-M. Chen, X.-S. Wei, P.~Wang, Y.~Guo, Multi-label image recognition with
  graph convolutional networks, in: Proceedings of the IEEE/CVF Conference on
  Computer Vision and Pattern Recognition, 2019, pp. 5177--5186.

\bibitem{padhy2020revisiting}
S.~Padhy, Z.~Nado, J.~Ren, J.~Liu, J.~Snoek, B.~Lakshminarayanan, Revisiting
  one-vs-all classifiers for predictive uncertainty and out-of-distribution
  detection in neural networks, arXiv preprint arXiv:2007.05134.

\bibitem{franchi2020one}
G.~Franchi, A.~Bursuc, E.~Aldea, S.~Dubuisson, I.~Bloch, One versus all for
  deep neural network incertitude (ovnni) quantification, arXiv preprint
  arXiv:2006.00954.

\bibitem{kammler1979least}
D.~W. Kammler, Least squares approximation of completely monotonic functions by
  sums of exponentials, SIAM Journal on Numerical Analysis 16~(5) (1979)
  801--818.

\bibitem{ovadia2019can}
Y.~Ovadia, E.~Fertig, J.~Ren, Z.~Nado, D.~Sculley, S.~Nowozin, J.~Dillon,
  B.~Lakshminarayanan, J.~Snoek, Can you trust your model's uncertainty?
  evaluating predictive uncertainty under dataset shift, in: H.~Wallach,
  H.~Larochelle, A.~Beygelzimer, F.~d\textquotesingle Alch\'{e}-Buc, E.~Fox,
  R.~Garnett (Eds.), Advances in Neural Information Processing Systems,
  Vol.~32, Curran Associates, Inc., 2019.

\bibitem{hein2019relu}
M.~Hein, M.~Andriushchenko, J.~Bitterwolf, Why relu networks yield
  high-confidence predictions far away from the training data and how to
  mitigate the problem, in: Proceedings of the IEEE Conference on Computer
  Vision and Pattern Recognition, 2019, pp. 41--50.

\bibitem{gawlikowski2021survey}
J.~Gawlikowski, C.~R.~N. Tassi, M.~Ali, J.~Lee, M.~Humt, J.~Feng, A.~Kruspe,
  R.~Triebel, P.~Jung, R.~Roscher, et~al., A survey of uncertainty in deep
  neural networks, arXiv preprint arXiv:2107.03342.

\bibitem{blundell2015weight}
C.~Blundell, J.~Cornebise, K.~Kavukcuoglu, D.~Wierstra, Weight uncertainty in
  neural network, in: International Conference on Machine Learning, PMLR, 2015,
  pp. 1613--1622.

\bibitem{yao2019quality}
J.~Yao, W.~Pan, S.~Ghosh, F.~Doshi-Velez, Quality of uncertainty quantification
  for bayesian neural network inference, arXiv preprint arXiv:1906.09686.

\bibitem{hernandez2015probabilistic}
J.~M. Hern{\'a}ndez-Lobato, R.~Adams, Probabilistic backpropagation for
  scalable learning of bayesian neural networks, in: International conference
  on machine learning, PMLR, 2015, pp. 1861--1869.

\bibitem{gal2016dropout}
Y.~Gal, Z.~Ghahramani, Dropout as a bayesian approximation: Representing model
  uncertainty in deep learning, in: international conference on machine
  learning, PMLR, 2016, pp. 1050--1059.

\bibitem{maddox2019simple}
W.~J. Maddox, P.~Izmailov, T.~Garipov, D.~P. Vetrov, A.~G. Wilson, A simple
  baseline for bayesian uncertainty in deep learning, Advances in Neural
  Information Processing Systems 32.

\bibitem{lakshminarayanan2016simple}
B.~Lakshminarayanan, A.~Pritzel, C.~Blundell, Simple and scalable predictive
  uncertainty estimation using deep ensembles, in: Proceedings of the 31st
  International Conference on Neural Information Processing Systems, NIPS'17,
  2017, p. 6405–6416.

\bibitem{pearce2020uncertainty}
T.~Pearce, F.~Leibfried, A.~Brintrup, Uncertainty in neural networks:
  Approximately bayesian ensembling, in: International conference on artificial
  intelligence and statistics, PMLR, 2020, pp. 234--244.

\bibitem{gustafsson2020evaluating}
F.~K. Gustafsson, M.~Danelljan, T.~B. Schon, Evaluating scalable bayesian deep
  learning methods for robust computer vision, in: Proceedings of the IEEE/CVF
  Conference on Computer Vision and Pattern Recognition Workshops, 2020, pp.
  318--319.

\bibitem{guo2017calibration}
C.~Guo, G.~Pleiss, Y.~Sun, K.~Q. Weinberger, On calibration of modern neural
  networks, in: International Conference on Machine Learning, PMLR, 2017, pp.
  1321--1330.

\bibitem{kull2019beyond}
M.~Kull, M.~Perello~Nieto, M.~K\"{a}ngsepp, T.~Silva~Filho, H.~Song, P.~Flach,
  Beyond temperature scaling: Obtaining well-calibrated multi-class
  probabilities with dirichlet calibration, in: H.~Wallach, H.~Larochelle,
  A.~Beygelzimer, F.~d\textquotesingle Alch\'{e}-Buc, E.~Fox, R.~Garnett
  (Eds.), Advances in Neural Information Processing Systems, Vol.~32, Curran
  Associates, Inc., 2019.

\bibitem{zhang2020mix}
J.~Zhang, B.~Kailkhura, T.~Y.-J. Han, Mix-n-match: Ensemble and compositional
  methods for uncertainty calibration in deep learning, in: International
  Conference on Machine Learning, PMLR, 2020, pp. 11117--11128.

\bibitem{rahimi2020intra}
A.~Rahimi, A.~Shaban, C.-A. Cheng, R.~Hartley, B.~Boots, Intra order-preserving
  functions for calibration of multi-class neural networks, in: H.~Larochelle,
  M.~Ranzato, R.~Hadsell, M.~F. Balcan, H.~Lin (Eds.), Advances in Neural
  Information Processing Systems, Vol.~33, Curran Associates, Inc., 2020, pp.
  13456--13467.

\bibitem{milios2018dirichlet}
D.~Milios, R.~Camoriano, P.~Michiardi, L.~Rosasco, M.~Filippone,
  Dirichlet-based gaussian processes for large-scale calibrated classification,
  in: S.~Bengio, H.~Wallach, H.~Larochelle, K.~Grauman, N.~Cesa-Bianchi,
  R.~Garnett (Eds.), Advances in Neural Information Processing Systems,
  Vol.~31, Curran Associates, Inc., 2018.

\bibitem{wenger2020non}
J.~Wenger, H.~Kjellstr{\"o}m, R.~Triebel, Non-parametric calibration for
  classification, in: International Conference on Artificial Intelligence and
  Statistics, PMLR, 2020, pp. 178--190.

\bibitem{mukhoti2020calibrating}
J.~Mukhoti, V.~Kulharia, A.~Sanyal, S.~Golodetz, P.~Torr, P.~Dokania,
  Calibrating deep neural networks using focal loss, in: H.~Larochelle,
  M.~Ranzato, R.~Hadsell, M.~F. Balcan, H.~Lin (Eds.), Advances in Neural
  Information Processing Systems, Vol.~33, Curran Associates, Inc., 2020, pp.
  15288--15299.

\bibitem{seo2019learning}
S.~Seo, P.~H. Seo, B.~Han, Learning for single-shot confidence calibration in
  deep neural networks through stochastic inferences, in: Proceedings of the
  IEEE/CVF Conference on Computer Vision and Pattern Recognition, 2019, pp.
  9030--9038.

\bibitem{kumar2018trainable}
A.~Kumar, S.~Sarawagi, U.~Jain, Trainable calibration measures for neural
  networks from kernel mean embeddings, in: International Conference on Machine
  Learning, PMLR, 2018, pp. 2805--2814.

\bibitem{leibig2017leveraging}
C.~Leibig, V.~Allken, M.~S. Ayhan, P.~Berens, S.~Wahl, Leveraging uncertainty
  information from deep neural networks for disease detection, Scientific
  reports 7~(1) (2017) 1--14.

\bibitem{hendrycks2016baseline}
D.~Hendrycks, K.~Gimpel, A baseline for detecting misclassified and
  out-of-distribution examples in neural networks, arXiv preprint
  arXiv:1610.02136.

\bibitem{devries2018learning}
T.~DeVries, G.~W. Taylor, Learning confidence for out-of-distribution detection
  in neural networks, arXiv preprint arXiv:1802.04865.

\bibitem{liang2017enhancing}
S.~Liang, Y.~Li, R.~Srikant, Enhancing the reliability of out-of-distribution
  image detection in neural networks, in: International Conference on Learning
  Representations, 2018.

\bibitem{lee2017training}
K.~Lee, H.~Lee, K.~Lee, J.~Shin, Training confidence-calibrated classifiers for
  detecting out-of-distribution samples, in: International Conference on
  Learning Representations, 2018.

\bibitem{wang2017safer}
W.~Wang, A.~Wang, A.~Tamar, X.~Chen, P.~Abbeel, Safer classification by
  synthesis, arXiv preprint arXiv:1711.08534.

\bibitem{nalisnick2018deep}
E.~Nalisnick, A.~Matsukawa, Y.~W. Teh, D.~Gorur, B.~Lakshminarayanan,
  \href{https://openreview.net/forum?id=H1xwNhCcYm}{Do deep generative models
  know what they don't know?}, in: International Conference on Learning
  Representations, 2019.
\newline\urlprefix\url{https://openreview.net/forum?id=H1xwNhCcYm}

\bibitem{hendrycks2018deep}
D.~Hendrycks, M.~Mazeika, T.~Dietterich,
  \href{https://openreview.net/forum?id=HyxCxhRcY7}{Deep anomaly detection with
  outlier exposure}, in: International Conference on Learning Representations,
  2019.
\newline\urlprefix\url{https://openreview.net/forum?id=HyxCxhRcY7}

\bibitem{tewari2007consistency}
A.~Tewari, P.~L. Bartlett, On the consistency of multiclass classification
  methods., Journal of Machine Learning Research 8~(5).

\bibitem{carlini2017adversarial}
N.~Carlini, D.~Wagner, Adversarial examples are not easily detected: Bypassing
  ten detection methods, in: Proceedings of the 10th ACM Workshop on Artificial
  Intelligence and Security, 2017, pp. 3--14.

\bibitem{yang2021generalized}
J.~Yang, K.~Zhou, Y.~Li, Z.~Liu, Generalized out-of-distribution detection: A
  survey, arXiv preprint arXiv:2110.11334.

\bibitem{zadrozny2002transforming}
B.~Zadrozny, C.~Elkan, Transforming classifier scores into accurate multiclass
  probability estimates, in: Proceedings of the eighth ACM SIGKDD international
  conference on Knowledge discovery and data mining, 2002, pp. 694--699.

\bibitem{shu2017doc}
L.~Shu, H.~Xu, B.~Liu, {DOC}: Deep open classification of text documents, in:
  Proceedings of the 2017 Conference on Empirical Methods in Natural Language
  Processing, Association for Computational Linguistics, Copenhagen, Denmark,
  2017.

\bibitem{macedo2019isotropy}
D.~Mac{\^e}do, T.~I. Ren, C.~Zanchettin, A.~L.~I. Oliveira, T.~Ludermir,
  Entropic out-of-distribution detection, in: 2021 International Joint
  Conference on Neural Networks (IJCNN), 2021, pp. 1--8.

\bibitem{mcglinn1978uniform}
R.~J. McGlinn, Uniform approximation of completely monotone functions by
  exponential sums, Journal of Mathematical Analysis and Applications 65~(1)
  (1978) 211--218.

\bibitem{hendrycks2018benchmarking}
D.~Hendrycks, T.~Dietterich, Benchmarking neural network robustness to common
  corruptions and perturbations, in: International Conference on Learning
  Representations, 2019.

\bibitem{graves2011practical}
A.~Graves, Practical variational inference for neural networks, in: Advances in
  neural information processing systems, Citeseer, 2011, pp. 2348--2356.

\bibitem{riquelme2018deep}
C.~Riquelme, G.~Tucker, J.~Snoek, Deep bayesian bandits showdown: An empirical
  comparison of bayesian deep networks for thompson sampling, in: International
  Conference on Learning Representations, 2018.

\bibitem{he2016deep}
K.~He, X.~Zhang, S.~Ren, J.~Sun, Deep residual learning for image recognition,
  in: Proceedings of the IEEE conference on computer vision and pattern
  recognition, 2016, pp. 770--778.

\bibitem{kingma2014adam}
D.~P. Kingma, J.~Ba, \href{http://arxiv.org/abs/1412.6980}{Adam: A method for
  stochastic optimization}, in: ICLR (Poster), 2015.
\newline\urlprefix\url{http://arxiv.org/abs/1412.6980}

\bibitem{naeini2015obtaining}
M.~P. Naeini, G.~Cooper, M.~Hauskrecht, Obtaining well calibrated probabilities
  using bayesian binning, in: Proceedings of the AAAI Conference on Artificial
  Intelligence, Vol.~29, 2015.

\bibitem{brier1950verification}
G.~W. Brier, Verification of forecasts expressed in terms of probability,
  Monthly weather review 78~(1) (1950) 1--3.

\bibitem{gneiting2007strictly}
T.~Gneiting, A.~E. Raftery, Strictly proper scoring rules, prediction, and
  estimation, Journal of the American statistical Association 102~(477) (2007)
  359--378.

\bibitem{quinonero2005evaluating}
J.~Quinonero-Candela, C.~E. Rasmussen, F.~Sinz, O.~Bousquet, B.~Sch{\"o}lkopf,
  Evaluating predictive uncertainty challenge, in: Machine Learning Challenges
  Workshop, Springer, 2005, pp. 1--27.

\bibitem{demvsar2006statistical}
J.~Dem{\v{s}}ar, Statistical comparisons of classifiers over multiple data
  sets, The Journal of Machine Learning Research 7 (2006) 1--30.

\bibitem{sun2021react}
Y.~Sun, C.~Guo, Y.~Li, React: Out-of-distribution detection with rectified
  activations, Advances in Neural Information Processing Systems 34.

\bibitem{yeh2017learning}
C.-K. Yeh, W.-C. Wu, W.-J. Ko, Y.-C.~F. Wang, Learning deep latent space for
  multi-label classification, in: Thirty-first AAAI conference on artificial
  intelligence, 2017.

\end{thebibliography}

	\appendix
	
	\section{Justification of the exponential calibration}
	\label{appendix:expcal}
	We justify a particular form of the exponential calibration \eqref{eq:calform} whose aim is to "invert" the multiplication of many probabilities or to counteract the contractive character of this multiplication. We start off from an (uncalibrated) output of the \our{} model given in \eqref{eq:pred} for a generic input $x$:
	\begin{align*}
		\hat{P} = \hat{p}_1 (1-\hat{p}_2)... (1-\hat{p}_K),
	\end{align*}
	where we introduce a succinct notation for the uncalibrated probabilities $\hat{p}(i|x) \to \hat{p}_i, \hat{P}_{\textrm{\our}}(1|x) \to \hat{P}$. Without loss of generality, we pick out the first probability as the "correct" one. In general, since all $\hat{p}_i$'s are numbers between zero and one, the multiplicative \our{} probability $\hat{P}$ becomes, as $K$ increases, more likely to reach zero. For single-label examples, $\hat{p}_1 \approx 1$ and $\hat{p}_{i\neq 1} \approx 0$, which results in a (relatively small) drift away from $1$. However, it quickly becomes unacceptably large when the number of classes $K$ grows.
	
	We consider this multiplication-induced drift in a tractable, yet artificial case of complete ignorance where each partial probability $\hat{p}_i \sim U(0,1)$ is drawn from an independent uniform distribution. A simple calculation using the Mellin transform results in the distribution over \our{} probabilities:
	\begin{align}
		\label{eq:uncalrand}
		\mathbb{P}_{uncal}(\hat{P}) = \frac{1}{(K-1)!} 
		\left ( \log \frac{1}{\hat{P}}\right )^{K-1}
	\end{align}
	which becomes highly concentrated around $\hat{P}\sim 0$ even for a moderate number of classes $K=10$.
	
	On the other hand, calibration \eqref{eq:calibr} is a functional relation $P = c(\hat{P})$ that maps between the uncalibrated $\hat{P}$ and calibrated probabilities $P$. In the special case \eqref{eq:uncalrand}, a valid calibration $c$ is conceptualized as a function that reverts back to a form of complete ignorance, i.e., the initial uniform probability density:
	\begin{align*}
		\mathbb{P}_{cal}(P) = \int d\hat{P} \delta (P - c(\hat{P})) \mathbb{P}_{uncal}(\hat{P}) = 1, \quad \text{for}~ P \in (0,1).
	\end{align*}
	With the usual properties of the Dirac delta function, a transform achieving this is simply the cumulative distribution function:
	\begin{align}
		c(\hat{P}) = \int_{-\infty} ^{\hat{P}} \mathbb{P}_{uncal}(\hat{P}') d\hat{P}'.
		\label{eq:exactcal}
	\end{align}
	
	\begin{figure}[h!]
		\centering
		\includegraphics[width=.6\textwidth]{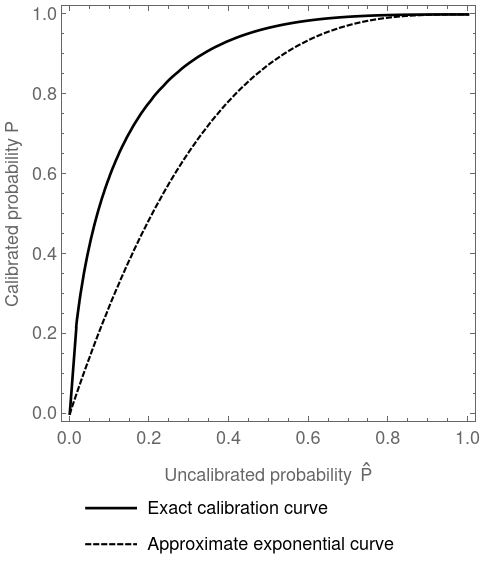}
		\caption{Calibration curve in the fully random case. The black solid line is the exact calibration curve \eqref{eq:exactcal} while the thin dashed line is its exponential approximate form. We use $K=3$.}
		\label{fig:justcal}
	\end{figure}
	
	Then, in the fully random case, the calibration map is the incomplete gamma function $c(\hat{P}) = \frac{\Gamma(K,\log 1/\hat{P})}{(K-1)!}$. Now, we look for an approximation of this form in terms of elementary functions. The resulting calibration curve is shown in Figure \ref{fig:justcal}. To this end, we expand the incomplete Gamma function around $\hat{P}=1$ resulting in
	$c(\hat{P}) \sim 1 - \frac{(1-\hat{P})^K}{(K-1)!}$ and then ensure that the approximation is properly normalized $P \in (0,1)$. The simplified calibration curve reads $c_{\textrm{appr}}(\hat{P}) = 1 - (1-\hat{P})^K$. We show both exact and approximate curves in Figure \ref{fig:justcal}. 
	
	This simple argument justifies the exponential form of the calibration curve $c_{\textrm{appr}}(\hat{P}) \sim \hat{P}^\alpha$. At the same time, it is based on a generic example and needs further generalization resulting in formula \eqref{eq:calform} where we make a linear combination of exponential terms and train both the linear coefficients $\beta$ as well as the exponents $\alpha$.
	
	\section{Calibration stability}
	\label{appendix:calstability}
	The exponential calibration proposed in this work is an integral and novel part of the \our{} approach. Therefore, to assess any shortcomings of the method, we inspect it under changes in the number of adjustable parameters $M$ and the size of the calibration dataset $n_b$. 
	
	\begin{figure}[h!]
		\centering
		\includegraphics[width=1\textwidth]{ 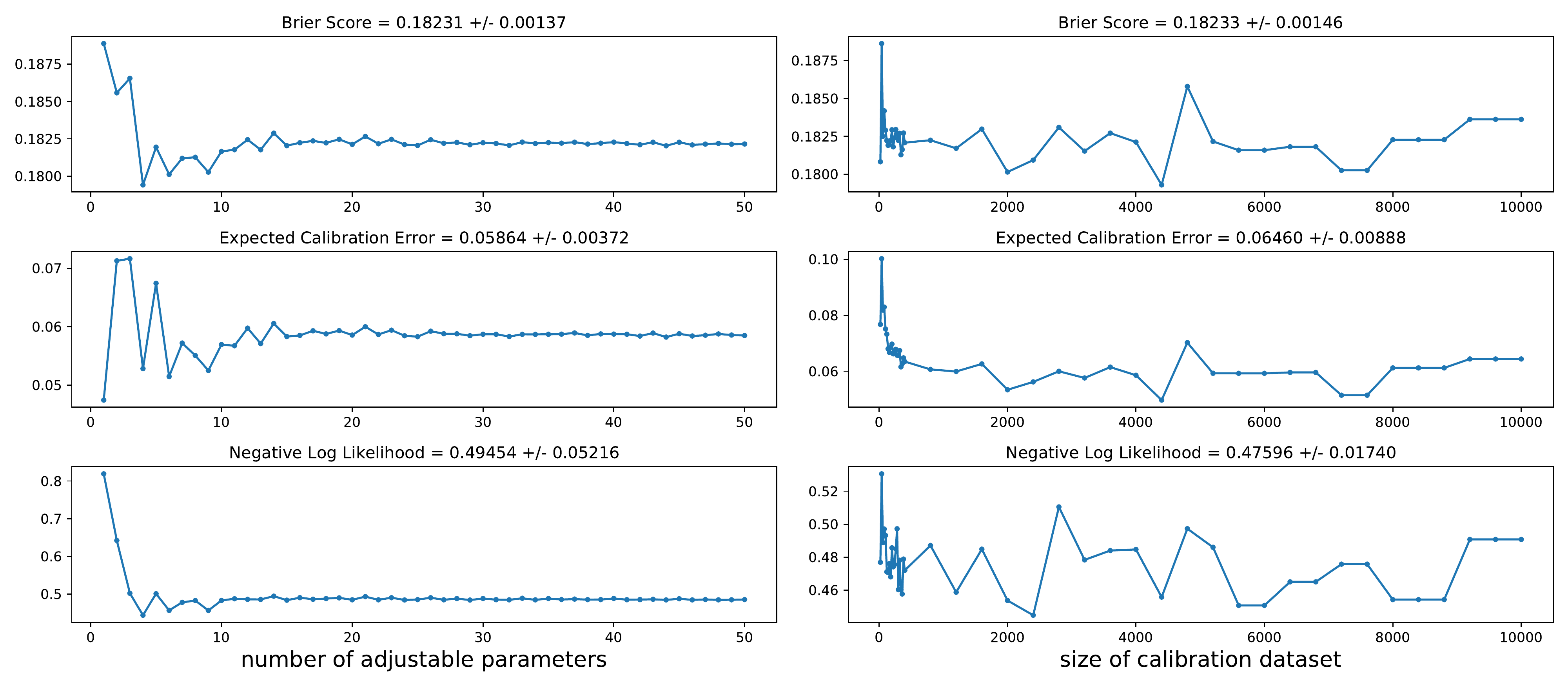}
		\caption{Stability of the calibration method under changes in the number of adjustable parameters $M$ and in the size of the calibration dataset $n_b$. We inspect variability of the NLL, ECE, Brier score as we change $M \in (1,50)$ and $n_b \in (20,10000)$ and identify a stable region of parameters for $M>10$ and $n_b>400$.}
		\label{fig:stability}
	\end{figure}
	
	The number of parameters $M$ refers to the pairs of weights $(\alpha_i,\beta_i)$ where $i=1,\ldots,M$ as defined in \eqref{eq:calibr}. The size of the calibration dataset $n_b$ is in turn discussed in Section \ref{sec:expcal}. We assess the robustness of the calibration algorithm in Figure \ref{fig:stability} by calculating ECE, NLL, and Brier score metrics across changes in $M \in (1,50)$ and $n_b \in (20,10000)$. 
	
	In the case of adjustable parameters $M$, after a brief instability period when $M<10$, we enter the regime with $M>10$ where the metric values do not change appreciably. Similarly, a similar stability region is found if the size of the dataset $n_b$ is greater than $400$. In conclusion, the calibration method is robust with respect to these parameter changes.
	
	\section{Table of metrics for the dataset shift experiment}
	\label{appendix:resultstab}
	The tables below report quartiles of Brier score, negative log-likelihood, and ECE in the dataset shift experiment considered in Section \ref{sec:expshift} where quartiles are computed over all corrupted variants of the dataset.
	\begin{center}
		\begin{tabular}{l|rrrrr}
			\toprule
			method & SLOVA &  Dropout &  Ensemble &  LL Dropout &  LL SVI \\
			\midrule
			Brier Score (25th) &  0.215 &    0.217 &     0.160 &       0.258 &   0.241 \\
			Brier Score (50th) &  0.299 &    0.365 &     0.294 &       0.412 &   0.405 \\
			Brier Score (75th) &  0.553 &    0.657 &     0.556 &       0.746 &   0.735 \\
			\hline
			ECE (25th)         &  0.021 &    0.012 &     0.018 &       0.089 &   0.075 \\
			ECE (50th)         &  0.027 &    0.039 &     0.024 &       0.151 &   0.137 \\
			ECE (75th)         &  0.103 &    0.211 &     0.107 &       0.313 &   0.291 \\
			\hline
			NLL (25th)         &  0.507 &    0.452 &     0.334 &       0.619 &   0.572 \\
			NLL (50th)         &  0.709 &    0.811 &     0.636 &       1.070 &   1.062 \\
			NLL (75th)         &  1.383 &    1.743 &     1.529 &       2.377 &   2.368 \\
			\bottomrule
		\end{tabular}
		\begin{tabular}{l|rrrr}
			\toprule
			method &   SVI &  Temp Scaling &  Softmax &  OVA DM \\
			\midrule
			Brier Score (25th) & 0.245 &         0.228 &    0.242 &   0.300 \\
			Brier Score (50th) & 0.371 &         0.382 &    0.412 &   0.398 \\
			Brier Score (75th) & 0.635 &         0.649 &    0.733 &   0.702 \\
			\hline
			ECE (25th)         & 0.034 &         0.014 &    0.074 &   0.063 \\
			ECE (50th)         & 0.074 &         0.049 &    0.133 &   0.079 \\
			ECE (75th)         & 0.214 &         0.140 &    0.284 &   0.166 \\
			\hline
			NLL (25th)         & 0.520 &         0.475 &    0.574 &   0.534 \\
			NLL (50th)         & 0.839 &         0.850 &    1.012 &   0.731 \\
			NLL (75th)         & 1.714 &         1.589 &    2.219 &   1.294 \\
			\bottomrule
		\end{tabular}
	\end{center}
\end{document}